\newtheorem{theorem}{Theorem}
\newtheorem{lemma}{Lemma}
\newtheorem{definition}{Definition}
\newcommand{\@chapapp}{\relax}%
\newcommand{\DefMacro}[2]{\expandafter\newcommand\csname rmk-#1\endcsname{#2}}
\newcommand{\UseMacro}[1]{\csname rmk-#1\endcsname}
\newcommand{\XComment}[1]{}
\newcommand{\Space}[1]{}
\definecolor{gray}{RGB}{211,211,211}
\newcommand{\jbasicstyle}{\small\sffamily}
\newcommand{\jnumberstyle}{\scriptsize}
\lstdefinelanguage{pseudo}
{ morekeywords={for, in, break, continue, try, except, not,
  if,else,return,map,fieldElement_array_array40,fieldElement_array40},
  keywordstyle=\bfseries, lineskip=-0.1em, numbers=left,
  numberstyle=\jnumberstyle, numbersep=4pt, basicstyle=\jbasicstyle,
  breaklines=true, breakautoindent=true, tabsize=2,
  columns=fullflexible, morecomment=*[l][\textsl]{//},
  mathescape=true, }
\scriptsize\color{black},
\def\formulaname{Formula}
\newcommand{\CheLU}{CheLU\xspace}
\newcommand{\ReLU}{ReLU\xspace}
\newcommand{\RReLU}{RReLU\xspace}
\newcommand{\BReLU}{BReLU\xspace}
\newcommand{\Section}[1]{\section{#1}}
\newcommand{\Subsection}[1]{\subsection{#1}}
\LetLtxMacro{\todom}{\todo}
\renewcommand{\todo}[1]{\textcolor{blue}{[\textbf{#1}]}}
\newcommand{\todoi}[1]{\todom[inline]{#1}}
\renewcommand\vec{\mathbf}
\def\longrightharpoonup{\relbar\joinrel\rightharpoonup}
\def\longleftharpoondown{\leftharpoondown\joinrel\relbar}
\def\longrightleftharpoons{\mathop{\vcenter{\hbox{\ooalign{\raise1pt\hbox{$\longrightharpoonup\joinrel$}\crcr\lower1pt\hbox{$\longleftharpoondown\joinrel$}}}}}}
\def\rxn{\mathop{\longrightarrow}\limits}   %
\def\revrxn{\mathop{\longrightleftharpoons}\limits}
\title{Programming and Training Rate-Independent Chemical Reaction Networks}
\author[a,1,2]{Marko Vasic}
\author[a,1,2]{Cameron Chalk}
\author[a]{Austin Luchsinger}
\author[a]{Sarfraz Khurshid}
\author[a,2]{David Soloveichik}
\affil[a]{The University of Texas at Austin, USA}
\keywords{chemical computation $|$ ReLU neural networks $|$ molecular programming}
\begin{abstract}

Embedding computation in biochemical environments incompatible with traditional electronics is expected to have wide-ranging impact in synthetic biology, medicine, nanofabrication and other fields.
Natural biochemical systems are typically modeled by chemical reaction networks (CRNs), and  CRNs can be used as a specification language for synthetic chemical computation.
In this paper, we identify a class of CRNs called non-competitive (NC) whose equilibria are absolutely robust to reaction rates and kinetic rate law, because their behavior is captured solely by their stoichiometric structure.
Unlike prior work on rate-independent CRNs, checking non-competition and using it as a design criterion is easy and promises robust output.
We also present a technique to program NC-CRNs using well-founded deep learning methods, showing a translation procedure from rectified linear unit (ReLU) neural networks to NC-CRNs.
In the case of binary weight ReLU networks, our translation procedure is surprisingly tight in the sense that a single bimolecular reaction corresponds to a single ReLU node and vice versa. 
This compactness argues that neural networks may be a fitting paradigm for programming rate-independent chemical computation.
As proof of principle, we demonstrate our scheme with
numerical simulations of CRNs translated from neural networks trained on traditional machine learning datasets (IRIS and MNIST),
as well as tasks better aligned with potential biological applications including virus detection and spatial pattern formation.
\end{abstract}
\begin{document}

\maketitle
\thispagestyle{firststyle}
\ifthenelse{\boolean{shortarticle}}{\ifthenelse{\boolean{singlecolumn}}{\abscontentformatted}{\abscontent}}{}

\todom{CC: Reinsert author contributions and author declaration}

\dropcap{C}ompared to our remarkable capacity to build complex electronic circuits, we  lack in our ability to engineer sophisticated reaction networks like the regulatory networks prevalent in biology. 
Molecular programming aims to engineer synthetic chemical information processors of increasing complexity from first principles.
This approach yields control modules compatible with the chemical environments within natural or synthetic cells, bioreactors, and in-the-field diagnostics.
Such computation could, for example, recognize disease state based on chemical inputs and actuate drug delivery to the affected cell.

A key object of molecular programming are chemical reaction networks (CRNs).
CRNs formally model chemical concentrations changing due to coupled chemical reactions in a well-mixed solution.
Biological CRNs are often hard to analyze because, in general, they require working with systems of coupled non-linear differential equations capable of highly complex dynamical systems behavior such as multi-stability, oscillation and chaos~\cite{epstein1998introduction}.
However, in engineering we may aim at specific classes of CRNs that are easier to reason about.
One such class has recently emerged in which information processing occurs solely due to the stoichiometric exchange of the reactants for products rather than the reaction rate~\cite{chen2014rate}.
An example of such computation is the single irreversible reaction $A + B \to C$ which computes the minimum function in the sense that the concentration of $C$ converges to the minimum of the initial concentrations of $A$ and $B$.
By coupling multiple reactions, 
more complex functions can be computed. 
Although stoichiometric computation is limited to continuous piecewise linear functions (with possible discontinuities at the axes),
this class of functions is computationally powerful as evidenced by the ability to approximate arbitrary functions,
as well as the widespread use of continuous piecewise linear functions in machine learning (e.g., neural networks with the ReLU activation function, see below).
\todom{CC: Should this be "piecewise affine functions"?}

Besides ease of analysis, such stoichiometrically computing CRNs are absolutely robust to variations in kinetics (\emph{rate-independence}).
Computation carried out by stoichiometry alone is correct whether the system obeys standard mass-action kinetics, Hill-function or Michaelis-Menten kinetics, or any other kinetic laws,
and does not err if the system is not well-mixed.
Engineering may also be aided by the fact that, unlike factors contributing to reaction rates, the stoichiometry of reactants and products is inherently digital and can be set exactly by the nature of the reaction.
For example, if realized with DNA strand displacement cascades, the identity and stoichiometry of  reactants and products can be programmed by synthesizing DNA strands with specific parts that are identical or complementary~\cite{soloveichik2010dna,chen2013programmable,srinivas2017enzyme}.
Note that such reactions can be made effectively irreversible as they are strongly driven by the formation of new base pairs.\footnote{Although we are motivated mostly by engineering concerns,
some biological CRNs may exhibit similar stoichiometric, rate-independent behaviour as identified in searches of the Biomodels repository~\cite{degrand2020graphical}.
}

In the first part of the paper we develop a new technique for proving that a class of CRNs stoichiometrically computes the desired function.
We identify the \emph{non-competitive} property, which means that a species is consumed in at most one reaction (see later for a formal definition).
We show that for non-competitive CRNs, rate-independence can be verified and the function computed can be determined by simple reasoning analogous to sequential programming:
Although all reactions occur simultaneously with continuously varying rates, we can imagine, counter-factually, that reactions happen sequentially in a series of straight line segments.
Non-competition is easy to check, and further fully captures the computational power of stoichiometric computation.
Thus, non-competitive CRNs are a powerful class of CRNs for rationally programming chemical behavior.
All subsequent constructions in this paper are non-competitive, and their correctness is proven via the above technique.
\todom{Add:
If a non-competive CRN is feed-forward in the sense that XXX, 
the sequence of reactions to consider follows from the feed-forward order.}

In the second part of this paper,
motivated by the widespread use of neural networks to generate behavior that is not easily specified programmatically,
we show a natural way to specify rate-independent chemical input-output behavior through training.
Specifically, we show how (feed-forward) ReLU (Rectified Linear Unit) neural networks can be directly implemented by non-competitive CRNs. 
ReLU neural networks are one of the most successful types of neural networks for deep learning, prevalent in all areas of machine learning. 
Thus we provide a powerful paradigm for creating chemical systems with complex computational functionality not easily obtained by other means.

The key elements of our general (rational-weight) ReLU neural network implementation are the ReLU and the weight multiplication modules.
Our ReLU module consists of a single unimolecular and a single bimolecular reaction.
Our weight multiplication module uses a number of uni- and bimolecular reactions that is proportional to the number of bits of precision in the weight.
(Although weight multiplication can be performed with two high-order reactions, such reactions cannot easily be implemented and are slow.)

To simplify the construction even further we consider restricting the class of ReLU neural networks to have $\{-1,0,1\}$ weights.
Despite the restriction on the values of the weights, 
such \emph{binary-weight} ReLU neural networks are known to be powerful in solving machine learning tasks and are well-researched in deep learning community~\cite{courbariaux2015binaryconnect}.
Applying an optimized version of our construction to binary weight ReLU networks yields a surprisingly compact CRN with only a single bimolecular reaction per ReLU node (plus additional unimolecular reactions at the input layer).

Showing how two models of computing can simulate each other elucidates the computational power of one model in terms of the other. 
In the case of stoichiometrically computing CRNs and ReLU neural networks, they are both capable of computing arbitrary continuous piecewise linear functions.\todom{CC: piecewise affine?}
However, since the size of the CRN depends on the digits of precision of the weights, making a quantitative connection between the computational power of the two models (e.g., comparing the number of reactions versus number of ReLU nodes to achieve the same functionality) is difficult.
Nonetheless, in the case of binary weight ReLU networks, 
we can make a tight connection between binary weight ReLU and the subclass of non-competitive CRNs in which a reaction involves any species at most once and with unit stoichiometry.
We show that such \emph{CheLU} CRNs and binary-weight ReLU networks can be considered to be equivalent models of computing as they can simulate each other with the number of ReLU nodes equalling the number of bimolecular reactions.

In the last part of the paper, we demonstrate through examples our procedure of using binary-weight ReLU neural networks to embed functionality in CRNs.  
For each example, we train the neural network classifier, generate the resulting CRN, and numerically simulate the CRN under the usual mass-action kinetics.
The kinetic simulation confirms convergence to the expected output and provides additional information about convergence time.
First, we train classifiers
on the widely used machine learning datasets IRIS and
MNIST. 
Next, motivated by the envisioned application of molecular computation in medical diagnostics, 
we differentiate
between four viral infections using chemical information
as input (gene expression levels).
Finally, an important direction of chemical computation in synthetic biology lies in spatial pattern formation with applications in tissue and organ engineering~\cite{santos2019using}.
As an example of spatial pattern formation, we use a neural network to generate a 2D pattern (heart shape).

\begin{figure}[t]
  \centering
  \includegraphics[width=\columnwidth]{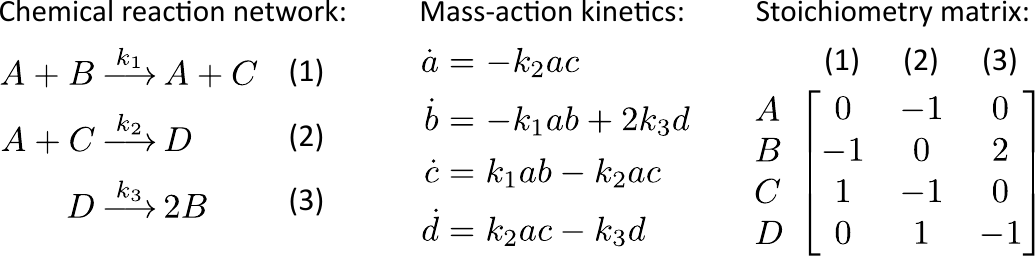}
  \caption{\textbf{Representations of chemical reaction networks.}
  The law of mass-action induces the differential equations describing the CRN's change in concentrations over time, where, e.g., $a$ represents the concentration of species $A$.
  The stoichiometry matrix captures the net change in species by each reaction, where entry $i,j$ corresponds to the change in species $i$ by applying reaction $j$.
}\label{fig:models}
\end{figure}
\section{Chemical Reaction Networks}\label{sec:CRNs}
Chemical reaction networks (CRNs) formally model the time evolution of molecules in a solution undergoing chemical interactions.
Besides the use of CRNs to capture the behavior of naturally existing chemical systems, synthetic biologists and molecular programmers often use CRNs as a programming language for rationally designed synthetic chemical networks such as DNA strand displacement cascades~\cite{chen2013programmable,srinivas2017enzyme} and DNA-enzyme networks~\cite{fujii2013predator}.
Related models of distributed computation include population protocols~\cite{angluin2006computation}, Petri nets~\cite{petri1966communication}, and vector addition systems~\cite{karp1969parallel}.

Next we provide some formal notation for CRNs aimed towards understanding the results of this work.
A CRN consists of a set of \emph{species} $\Lambda$ and a set of \emph{reactions}.
Reactions are written generally in this form:
$$r_1R_1 +  \dots + r_nR_n \rxn^k p_1P_1 + \dots + p_mP_m,$$
where $R_i, P_j \in \Lambda$ are the \emph{reactant} and \emph{product} species, respectively, the $r_i, p_j \in \mathbb{N}$ are \emph{stoichiometric coefficients} quantifying how much of each species is produced and how much is consumed, and $k$ is the rate constant used to describe the rate of the reaction in kinetic models like mass-action kinetics.
We note that although reactions written this way are irreversible, i.e., the products cannot react to form the reactants, in nature reactions always have some degree of reversibility.
However, synthetic chemical reactions can be made highly irreversible~\footnote{For example, implementing CRNs via DNA strand displacement  yields reactions which are driven by the formation of additional base pairs, and can be designed to be highly thermodynamically favorable~\cite{soloveichik2010dna,chen2013programmable,srinivas2017enzyme}.} and if desired this model can include the reverse of each reaction, e.g. $R_1 + R_2 \rxn P$ and $P \rxn R_1 + R_2$.
While the results of Section~\ref{sec:CRNprogramming} apply to reactions with arbitrarily many reactants,
the constructions in Sections~\ref{sec:rrelu} and~\ref{sec:brelu} consist of reactions with at most two reactants.
Reactions with more than two reactants are slow in practice, as they require the co-localization of more than two molecules before reactions can occur.
Further, while simulation of high-order reactions by bimolecular ones is possible, the typical method disturbs kinetics and does not fit in the non-competitive class (defined later) we are focusing on.\footnote{The typical method for simulating, e.g., the reaction $3X \rxn Y$ is to use the reactions $X + X \revrxn X_1$ and $X + X_1 \rxn Y$.}

A \emph{state} of a CRN is an assignment of nonnegative real-valued \emph{concentrations} (amount per volume) to each species. 
It helps to pick an arbitrary ordering on the species so that we can view states as vectors from $\mathbb{R}_{\geq 0}^\Lambda$ for compatibility with linear algebra techniques used later.
We use $\vec{a}(S)$ to denote the concentration of species $S$ in state $\vec{a}$.

CRNs are typically modeled either by differential equations or as stochastic processes.
Much of the discussion in this paper centers on the ubiquitous continuous mass-action kinetics model (example in Figure~\ref{fig:models}) which prescribes differential equations from reaction rates proportional to the product of the reactants' concentrations.
However, we focus on CRNs whose convergence state is independent of rate law, so assuming mass-action kinetics is not required for our theory to hold and constructed CRNs to compute correctly.
Further, an analogy of our Theorem~\ref{thm:main} holds for discrete stochastic models and is presented in SI Appendix~\ref{sec:stochastic_proofs}.

Next we present a \emph{nondeterministic kinetic model}, first proposed by~\cite{chen2014rate}, designed to isolate the effect of stoichiometry from the effect of rates.
This model does not intend to capture real-world chemical kinetics directly.
Instead, it is a simplified model that aids analysis of CRNs: as we will show, for the class of CRNs of interest, convergence in this simplified model implies convergence under mass-action kinetics and a wide variety of rate laws, even if the state of the CRN is initially perturbed.
Intuitively, the model explores the set of states reachable by the CRN assuming nothing about the kinetics besides that stoichiometry is obeyed.

The \emph{stoichiometry matrix} $\vec{M}$ captures the stoichiometric constraints of the CRN (example in Figure~\ref{fig:models}).
Assuming an ordering on species and reactions,
each column corresponds to a reaction, and each row to a species:
$\vec{M}_{ij}$ corresponds to the net increase/decrease of species $i$ by applying reaction $j$.

Recall that by arbitrarily ordering the set of species $\Lambda$, we can view states of the CRN as vectors of concentrations $\vec{a} \in \vec{R}^{\Lambda}_{\geq 0}$.
Then we can also describe \emph{flux vectors} which are column vectors $\vec{u} \in \vec{R}^{\Lambda}_{\geq 0}$ which describe arbitrary, simultaneous applications of reactions, which when multiplied by the stoichiometry matrix $\vec{M}$ yield the change in concentrations caused by applying those reactions.
Since $\vec{u}$ describes a set of reactions to happen, we say $\vec{u}$ is \emph{applicable} at a state $\vec{a}$ if all species which are reactants in the set of reactions in $\vec{u}$ have positive concentration in $\vec{a}$; formally, $\vec{u}$ is applicable at $\vec{a}$ if $\vec{u}(S) > 0$ implies that all reactants $R$ of reaction $S$ have $\vec{a}(R) > 0$.
For states $\vec{a}$ and $\vec{b}$, we say $\vec{a} \rightarrow^1_\vec{u} \vec{b}$ if there is a flux vector $\vec{u}$ applicable\footnote{Removing the applicability constraint would trivialize finding the set of reachable states of the CRN but would lead to erroneous analysis.
For example, given the CRN $X_1 + X_2 \rxn Y + Z$, $Z \rxn X_2$, given the ordering on species $X_1, X_2, Y, Z$ and an initial state $\vec{a} = [10, 0, 0, 0]$, state $\vec{b} = [0,0,10,0]$, and flux vector $\vec{u}= [10,10]$, we would have that $\vec{b} = \vec{M}\vec{u} + \vec{a}$, although from $\vec{a}$ no reactions should be applicable because there is initially zero concentration of $X_2$ and $Z$.} at $\vec{a}$ such that $\vec{b} = \vec{M}\vec{u} + \vec{a}$; this is \emph{straight-line} reachability.
Given $\vec{a} \rightarrow_{\vec{u}}^1 \vec{b}$, we say reaction $R$ is being \emph{applied} if $\vec{u}(R) > 0$.
We say $\vec{a} \rightarrow \vec{b}$ if there is a finite length sequence $\vec{a} \rightarrow^1 \dots \rightarrow^1 \vec{b}$, i.e., $\rightarrow$ is the transitive reflexive closure of $\rightarrow^1$; this is called \emph{line-segment} reachability.
If no flux vectors $\vec{u}$ besides the zero vector are applicable at state $\vec{b}$, then we call $\vec{b}$ a \emph{static} state.

\section{Programming CRN Computation by Stoichiometry}\label{sec:CRNprogramming}
The computational power of CRNs typically arises from both kinetics and stoichiometry. 
However, the equilibrium of certain CRNs can be understood entirely by the stoichiometric exchange of reactants for products (Figure~\ref{fig:divby2}).
Such systems have been used as an alternate paradigm for programming complex chemical behavior~\cite{chen2014deterministic,chen2014rate}, inspired by similar notions in distributed computing~\cite{angluin2006computation}.
We call such CRNs \emph{stoichiometrically defined}.\footnote{Previous work calls this notion \emph{stable computation}. We use the term \emph{stoichiometrically defined} to avoid confusion with other notions of stability in chemistry.}

\begin{figure}[]
  \centering
  \includegraphics[]{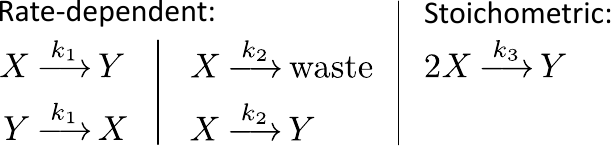}
  \caption{\textbf{Two rate-dependent CRNs and one stoichiometrically-defined CRN computing $y = \frac{x}{2}$.}
  The concentration of species $Y$ as time goes to infinity is half of the initial concentration of species $X$.
  The rate-dependent CRNs require that the rate constants of the two reactions are equal, while the third single-reaction CRN has no rate constraints.}\label{fig:divby2}
\end{figure}

\todom{Fig. 2. Rename "Stoichometric" to "Stoichiometric"}

To view CRNs as a method of computation (or, a programming language), we assign some species to be the inputs and others to be the outputs.
Then, given initial concentrations of the input species, the output of the computation is the equilibrium state of the system, i.e., the concentrations of the output species in the limit as time goes to infinity.~\footnote{There are alternative notions of computation by CRN; for example, a CRN may compute $f(t)$ in the sense that the concentration of a species is equal to $f(t)$ for all times $t$.}
Generally, given a function $f: \mathbb{R}^n_{\geq 0} \rightarrow \mathbb{R}^m_{\geq 0}$, some input species $X_1, \dots, X_n$ and an initial concentration assignment to each will represent an input vector $\vec{x}$, and output species $Y_1, \dots, Y_m$ and their respective concentrations at equilibrium will represent the output vector $\vec{y}$ such that $f(\vec{x}) = \vec{y}$.

A small example is the reaction $X_1 + X_2 \rxn Y$ which computes $f(x_1, x_2) = \min(x_1, x_2)$, since the reaction converges to a state where either $X_1$ or $X_2$, whichever has initially lower concentration, is depleted.
A more complex example computes $f(x_1, x_2) = \max(x_1, x_2)$ (Figure~\ref{fig:max}).

\begin{figure}[]
  \centering
  \includegraphics[]{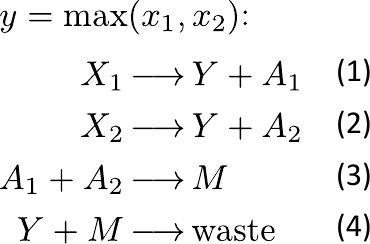}
  \hfill
  \includegraphics[]{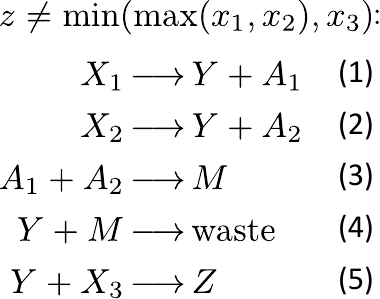}
  \caption{\textbf{(Left) A stoichiometrically-defined CRN computing the $\max$ function.}
Let $x_i(0)$ be the initial concentration of the input species $X_i$; all other initial concentrations are assumed to be $0$. Reactions (1) and (2) converge to an amount of $Y$ equal to $x_1(0) + x_2(0)$, and amounts of $A_1, A_2$ equal to $x_1(0), x_2(0)$, respectively.
Reaction (3) converges to an amount of $M$ equal to the $\min$ between the amounts of $A_1$ and $A_2$ produced by reactions (1) and (2), i.e., the $\min$ between $x_1(0)$ and $x_2(0)$.
In reaction (4), the $M$ species annihilate the $Y$ species, so that the concentration of $Y$ at convergence is decreased by the concentration of $M$, effectively computing subtraction.
In all, the amount of $Y$ converges to $x_1(0) + x_2(0) - \min(x_1(0), x_2(0)) = \max(x_1(0), x_2(0))$.
 Using Theorem~\ref{thm:main}, a formal argument of convergence is given by applying the reactions maximally, one-by-one, and in numerical order in the nondeterministic kinetic model.
 \textbf{(Right) Composing the $\max$ computing CRN with a $\min$ computing CRN does not yield a stoichiometrically-defined CRN computing $\min \circ \max$.}
  Reaction ($5$) attempts to use the output $Y$ of the $\max$ computing reactions shown in the left panel to compute $\min(y, x_3)$.
  This fails since reaction ($5$) might consume more $Y$ than $\max(x_1,x_2)$, and thus generate more than the correct amount of $Z$, by outcompeting reaction ($4$). 
  The extent of the error depends on the relative rates of reactions ($4$) and ($5$). 
  }\label{fig:max}
\end{figure}

\begin{figure}[]
  \centering
  \includegraphics[width=\columnwidth]{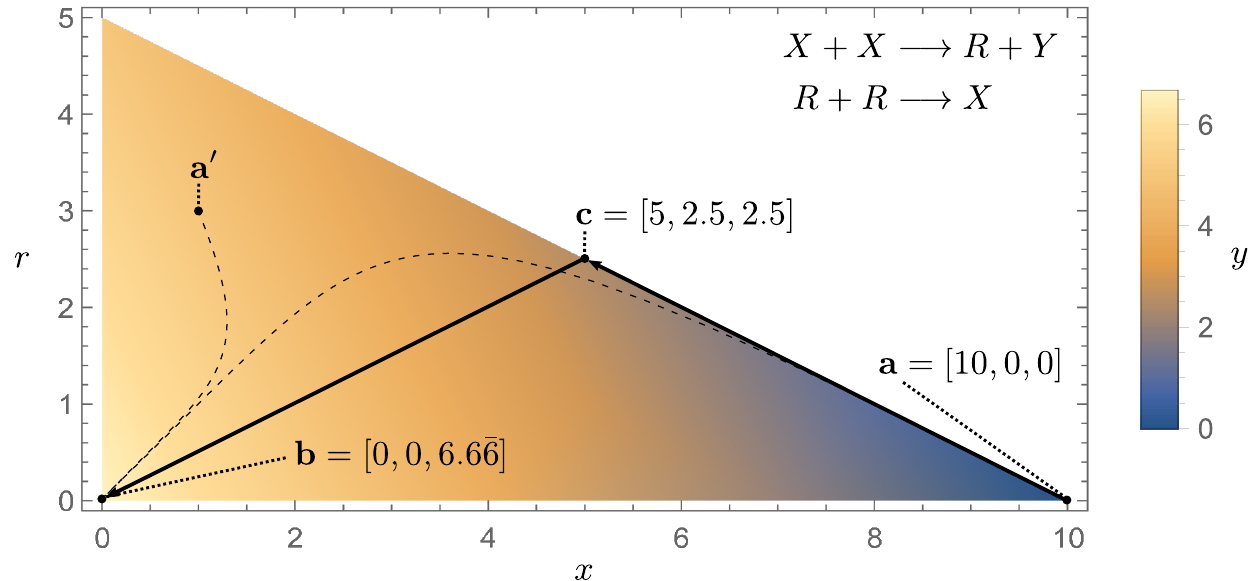}
  \caption{\textbf{Example application of Theorem~\ref{thm:main} on the non-competitive CRN $X + X \rxn R + Y$, $R + R \rxn X$ with initial state $\vec{a} = [10,0,0]$.}
  The shaded region shows all stoichiometrically reachable states from $\vec{a}$, i.e., all states $\vec{d}$ such that $\vec{a} \rightarrow \vec{d}$.
  Solid lines are straight-line reachable paths (specifically, $\vec{a}\rightarrow^1 \vec{c}$ and $\vec{c} \rightarrow^1 \vec{b}$) and dashed lines are mass-action trajectories (assuming both reactions have rate constant $1$, although the theorem applies to any rate constants).
  Since there is a path $\vec{a} \rightarrow \vec{b}$, Theorem~\ref{thm:main} implies that $\vec{a}$ also converges to $\vec{b}$ under mass action or any other fair rate law.
  Further, as shown by the state $\vec{a}'$, any state stoichiometrically reachable from $\vec{a}$ will also converge to $\vec{b}$ under mass action or any other fair rate law, showing that the convergence is robust to any initial perturbations that do not leave the stoichiometrically reachable space of states.
  }\label{fig:nc_theorem}
\end{figure}

\subsection{Non-competitive CRNs}
Here we identify a class of CRNs which we will show are easy to analyze and yet do not lose any computational power if we restrict to stoichiometrically defined, rate-independent computation.
To identify the class, note that an intuition for why the $\max$-computing CRN does not depend on rates is that each species is a reactant in at most one reaction, i.e., there is no competition between reactions for species.
For this reason, we find that reaction ($1$) of the $\max$-computing CRN must produce an amount of $Y$ and $A_1$ equal to the initial amount of $X_1$ as time goes to infinity, since $X_1$ cannot be decreased (nor increased) by any other reaction.
Reasoning about the other reactions similarly yields the correct output.
Carefully formalizing this intuition yields the following class of CRNs:
\begin{definition}
\emph{Non-competitive CRNs.}
A CRN is \emph{non-competitive} if every species which is decreased in a reaction is a reactant in only that reaction.
\end{definition}
\noindent Note that by the definition above, a reactant may appear in any number of reactions if it is not decreased (e.g., if it acts as a catalyst).

In SI Appendix~\ref{sec:NCCRNs_proofs}, we prove the following about non-competitive CRNs:

\begin{theorem}\label{thm:main}
For non-competitive CRNs, if $\vec{a} \rightarrow \vec{b}$ and $\vec{b}$ is a static state, then
for any state $\vec{a}'$ such that $\vec{a} \rightarrow \vec{a'}$, $\vec{a'}$ converges to $\vec{b}$ for any rate constants under mass-action kinetics.
\end{theorem}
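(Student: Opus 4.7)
Proof proposal:

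The plan is to establish convergence in three stages: uniqueness of the nondeterministically reachable static state, an a priori upper bound on the mass-action cumulative fluxes, and monotone convergence to the common limit.

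First, I would prove a Church--Rosser property for non-competitive CRNs: if $\vec{a} \rightarrow \vec{b}_1$ and $\vec{a} \rightarrow \vec{b}_2$ with both $\vec{b}_i$ static, then $\vec{b}_1 = \vec{b}_2$. Non-competition is crucial --- since every consumed species has a unique consuming reaction, any straight-line step commutes past subsequent reactions without blocking them, so any two flux sequences admit a common refinement; the saturation equations at a static state (for every reaction, some reactant is exhausted) then pin down the final state uniquely. A corollary, applied to the hypothesis $\vec{a} \rightarrow \vec{a}'$ and $\vec{a} \rightarrow \vec{b}$, is the existence of a flux $\vec{w}^* \geq \vec{0}$ realizing $\vec{a}' + \vec{M}\vec{w}^* = \vec{b}$.

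Next I would encode mass-action in cumulative-flux form: $u_r(t) := \int_0^t k_r \prod_S a(S,s)^{c_{S,r}}\,ds$, so that each $u_r$ is non-decreasing and $a(S, t) = \vec{a}'(S) + \sum_r M_{S, r} u_r(t)$. The central estimate is $u_r(t) \leq w^*_r$ for all $r, t$. At any point of the box $[\vec{0}, \vec{w}^*]$ with $u_r = w^*_r$ and $u_{r'} \leq w^*_{r'}$ for $r' \neq r$, non-competition forces $M_{S, r'} \geq 0$ for every reactant $S$ of $r$ and every $r' \neq r$, giving
\[
a(S) \;=\; \vec{b}(S) \;-\; \sum_{r' \neq r} p_{S, r'}(w^*_{r'} - u_{r'}) \;\leq\; \vec{b}(S).
\]
Since $\vec{b}$ is static, some reactant $S$ of $r$ has $\vec{b}(S) = 0$, forcing $a(S) = 0$ and thus $\dot{u}_r = 0$ along the entire upper face $\{u_r = w^*_r\}$. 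Combined with $\dot{u}_r \geq 0$ on the lower face $\{u_r = 0\}$, Nagumo's tangent-cone theorem makes the box invariant. Monotone convergence then gives $u_r(t) \uparrow u_r^\infty$, the state converges to some limit $\vec{b}^*$, and continuity of the rates forces $\dot{u}_r \to 0$, so that $\vec{b}^*$ is static. Since $\vec{b}^*$ is line-segment reachable from $\vec{a}$ (via $\vec{a} \rightarrow \vec{a}'$ followed by the stoichiometric decomposition of the mass-action trajectory), the uniqueness of step one yields $\vec{b}^* = \vec{b}$.

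The most delicate piece will be the Church--Rosser uniqueness: reactions in a non-competitive CRN intuitively "do not interfere", but turning that intuition into a clean commutation lemma that reduces any two static-state-reaching flux sequences to a common refinement satisfying the saturation equations requires care. The invariance argument for the a priori bound, by contrast, reduces neatly once the boxed display is in hand: Nagumo's theorem handles the subtlety that producer reactions could in principle replenish the bottleneck species, because the calculation shows such replenishment would require $u_{r'} = w^*_{r'}$ for every producer $r'$ simultaneously, which is precisely the boundary configuration under which the outward velocity vanishes.
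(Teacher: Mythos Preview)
Your approach is correct and takes a genuinely different route from the paper. The paper's proof is purely combinatorial at the reachability level: its central Lemma~1 shows that if any path from $\vec a$ applies some reaction strictly more than a fixed finite path $\vec a \rightarrow \vec b$ with $\vec b$ static, then $\vec b$ could not have been static. From this single lemma the paper derives uniqueness of the static state, bounded ``potential'' (no reaction can accumulate unbounded flux along any path from $\vec a$), and then invokes the fair-rate-law property of mass action---imported as a black box from~\cite{chen2014rate}---to transfer these reachability facts to the ODE trajectory. Your argument instead works directly with the mass-action ODE in cumulative-flux coordinates and establishes the a~priori bound $u_r(t)\le w^*_r$ via Nagumo invariance of the box $[\vec 0,\vec w^*]$; the boxed inequality $a(S)\le \vec b(S)$ on the upper face is exactly the pointwise analogue of the paper's Lemma~1, but you use it dynamically rather than combinatorially. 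This buys you a self-contained convergence proof that avoids the nontrivial ``mass-action trajectories are line-segment reachable'' lemma, at the cost of being specific to mass action (the paper's version applies verbatim to any fair rate law). Two small remarks: first, the invariant set for Nagumo should really be $[\vec 0,\vec w^*]\cap\{u:\vec a'+\vec Mu\ge 0\}$, since your display uses $a(S)\ge 0$ to force $a(S)=0$; the extra faces are handled by the standard orthant-invariance argument for mass action. Second, your instinct that the confluence step is the delicate one is right, but the paper's Lemma~1 gives a cleaner route than step-by-step commutation: compare total accumulated fluxes directly and show that any strict excess on one path leaves every reactant of the offending reaction positive at the other path's endpoint.
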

\noindent
Figure~\ref{fig:nc_theorem} illustrates a small application of this theorem.
The precondition of this theorem, that $\vec{a} \rightarrow \vec{b}$ with $\vec{b}$ static, is the same as providing a line-segment path from the input state to a static state with the correct output.
(For the $\max$ example, the line-segment path is simply to apply the reactions maximally in order.)
Thus, this theorem greatly simplifies the analysis of equilibrium for non-competitive CRNs.
Further, the theorem states that any state stoichiometrically compatible with the initial state still converges correctly under mass-action kinetics.
The path $\vec{a} \rightarrow \vec{a}'$ captures a wide class of perturbations, allowing any adversarial conditions to be applied to the system initially, such as non-well-mixedness or withholding of certain reactions, as long as stoichiometry is still obeyed.
Then, as long as mass-action kinetics are allowed to take over, the system converges to the output state $\vec{b}$.
(Note that $\vec{a}'$ can be equal to $\vec{a}$, since $\vec{a} \rightarrow \vec{a}$, meaning that this theorem also implies convergence from the initial state.)

In fact, we can apply Theorem~\ref{thm:main} to rate laws more general than mass action:
\begin{definition}\label{def:fair_rate_law}
A \emph{fair} rate law is any kinetic rate law which satisfies: (1) at any time, the rate of a reaction is nonzero if all of its reactants have nonzero concentration, and
(2) if $\vec{b}$ can be reached from $\vec{a}$ according to the rate law, then  $\vec{a} \rightarrow \vec{b}$.
\end{definition}

\noindent Theorem~\ref{thm:main} holds for any fair rate law.
In~\cite{chen2014rate}, it is proven that mass-action kinetics is fair.
(Note that only item $(2)$ of Definition~\ref{def:fair_rate_law} is nontrivial.)
One only needs to prove their relevant kinetic model has a fair rate law in order to apply Theorem~\ref{thm:main}.
\todom{Possibly mention that MM, Hill-function are fair.}

By the end of this section, we will see that restricting stoichiometrically defined computation to the non-competitive subclass does not restrict computational power.

\subsection{Composition of CRNs}
To construct large programs out of smaller ones requires \emph{composability}: CRNs computing functions $f_1$ and $f_2$ should be straightforwardly concatenated so that $f_2 \circ f_1$ is computed.
However, some of the constructions described do not satisfy composability.
For example, consider composing the $\min$ and $\max$ computing CRNs to compute $z = \min(\max(x_1, x_2), x_3)$ (Figure~\ref{fig:max}).
Based on this failure to compose, we can intuit that a CRN's output species must not be a reactant for a CRN to be composable:

\begin{definition}\label{def:composable}
\emph{Composability.} A CRN is composable if its output species $Y_1,\dots, Y_n$ do not appear as reactants.
\end{definition}

Previous work~\cite{chalk2019composable} proves that this composability definition is necessary\footnote{Although CRNs exist which can be composed and do have their output species as reactants in some reactions, \cite{chalk2019composable} proves that these CRNs can easily be simplified to CRNs which do not have their outputs as reactants.} and sufficient to compose stoichiometrically-defined CRN computations.
Further, they prove that the functions computable while obeying this constraint must be \emph{superadditive}:

\begin{definition}
\emph{Superadditive.} A function $f : \mathbb{R}^n_{\geq 0} \mapsto \mathbb{R}^m_{\geq 0}$ is superadditive if and only if for all $\vec{x}, \vec{y} \in \mathbb{R}^n_{\geq 0}$, $f(\vec{x} + \vec{y}) \geq f(\vec{x}) + f(\vec{y})$.
\end{definition}
Superadditivity is a very strong restriction; for example, the $\max$ function is not superadditive, and so cannot be computed by a composable CRN.
However, an alternative method for representation of logical values in a CRN avoids the superadditivity restriction for composability and simultaneously allows representation of negative numbers, as we will describe next.
\todoi{DS: Probably we should point out that if a CRN is not composable according to the above definition, then it will not be non-competitive.
There is a strong connection between composability and non-competitiveness that's missing right now. Note that the min(max()) example fails because of competition.
}
\todoi{MV: But MAX CRN is not composable and is non-competitive?}

\subsection{Dual-rail CRN computation}
If we wish to represent a variable $x$ that can take on negative values, we use a \emph{dual-rail} representation, which expresses a value $x$ as a difference in concentration between two species $X^+$ and $X^-$.
There are composable CRNs with dual-rail input/output convention which compute the $\min$ and $\max$ functions (Figure~\ref{fig:drminmax}).
\begin{figure*}[]
  \centering
  \includegraphics[scale=.9]{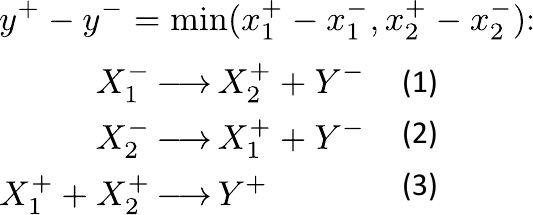}
  \hfill
  \includegraphics[scale = .9]{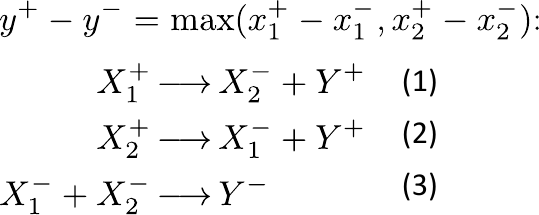}
  \hfill
  \includegraphics[scale=.9]{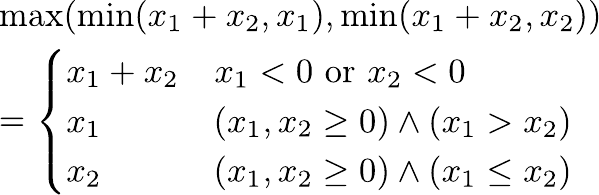}
  \caption{\textbf{Non-competitive, dual-rail, composable CRNs for computing $\min$ (Left) and $\max$ (Middle).}
  To analyze using Theorem~\ref{thm:main}, we can apply reactions maximally, one-by-one, and in order.
  For the $\min$ CRN, applying reaction ($1$) maximally yields $x_2^+ = x_2^+(0) + x_1^-(0)$ and $y^- = x_1^-(0)$; then applying reaction ($2$) yields $x_1^+ = x_1^+(0) + x_2^-(0)$ and $y^- = x_1^-(0) + x_2^-(0)$.
  Then applying reaction ($3$) maximally yields $y^+ = \min(x_1^+, x_2^+)$, and substituting the values of $x_1^+, x_2^+,$ and $y^-$ from applying the first two reactions, we get $y^+ - y^- = \min(x_1^+(0) + x_2^-(0), x_2^+(0) + x_2^-(0)) - (x_1^-(0) + x_2^-(0)) = \min(x_1^+(0) - x_1^-(0), x_2^+(0) - x_2^-(0))$ as desired.
  The $\max$ CRN's correctness follows from a similar analysis.
  \textbf{(Right) Continuous piecewise linear functions are compositions of $\max$, $\min$, and linear functions.} An example application of Theorem~\ref{thm:ovchinnikov} is shown.
  Using the CRNs on the left and middle, along with composable, non-competitive, dual-rail CRNs to compute $y = \frac{p}{q}x$ and $y = x_1 + x_2$, any continuous piecewise linear function can be computed by first applying the transformation of Theorem~\ref{thm:ovchinnikov}.}\label{fig:drminmax}
\end{figure*}

These $\min$ and $\max$ modules are important artifacts related to the computational power of stoichiometrically-defined computation, due to the following theorem.
Continuous piecewise rational linear functions were proven equivalent to expressions which are a $\max$ over $\min$s over rational linear functions (Figure~\ref{fig:drminmax}). 
Formally:
\begin{theorem}\label{thm:ovchinnikov}
Proven in~\cite{ovchinnikov2002max}: For every continuous piecewise linear function $f$ with pieces $f_1,\dots,f_p$, there exists a family $S_1, \dots, S_q \subseteq \{1, \dots, p\}$ with $S_i \not\subseteq S_j$ if $i \neq j$, such that for all $\vec{x}$, $f(\vec{x}) = \max_{i \in {1,\dots,q}}\min_{j \in S_i}f_j(\vec{x})$.
\end{theorem}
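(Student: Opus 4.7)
The plan is to construct the family $\{S_1, \ldots, S_q\}$ explicitly from the polyhedral cell structure of $f$, verify the two required pointwise inequalities, and finally prune to enforce the antichain condition $S_i \not\subseteq S_j$. Because $f$ is continuous piecewise linear with affine pieces $f_1, \ldots, f_p$, the domain decomposes into closed polyhedral cells $P_k = \{\vec{x} : f(\vec{x}) = f_k(\vec{x})\}$ whose union is the whole domain. For each $i$, I define the candidate set
\[
S_i^0 = \{ j : f_j(\vec{x}) \geq f_i(\vec{x}) \text{ for all } \vec{x} \in P_i \},
\]
the indices of pieces that lie weakly above $f_i$ throughout the cell on which $f_i$ realizes $f$. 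Automatically $i \in S_i^0$, so each $S_i^0$ is nonempty.

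The easy direction $f(\vec{y}) \leq \max_i \min_{j \in S_i^0} f_j(\vec{y})$ is immediate: given any $\vec{y}$, pick $i$ with $\vec{y} \in P_i$; then every $j \in S_i^0$ satisfies $f_j(\vec{y}) \geq f_i(\vec{y}) = f(\vec{y})$, with equality attained by $j = i$, so $\min_{j \in S_i^0} f_j(\vec{y}) = f(\vec{y})$.

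The main obstacle is the reverse inequality: for every $i$ and every $\vec{y}$, some $j \in S_i^0$ must satisfy $f_j(\vec{y}) \leq f(\vec{y})$. My plan is to fix $i$ and $\vec{y}$, pick a generic interior point $\vec{x}^\ast \in \mathrm{int}(P_i)$, and track the active piece along $\gamma(t) = (1-t)\vec{x}^\ast + t\vec{y}$. The segment traverses a finite sequence of cells $P_{k_0} = P_i, P_{k_1}, \ldots, P_{k_m}$ with $\vec{y} \in P_{k_m}$, and at each cell-boundary crossing time $t^\ast$ the outgoing and incoming active pieces agree at $\gamma(t^\ast)$ by continuity of $f$. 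Writing $\psi_j(t) = f_j(\gamma(t))$ (linear in $t$) and $h(t) = f(\gamma(t))$, let $\ell^\ast$ be the largest index with $k_{\ell^\ast} \in S_i^0$; since $k_0 = i \in S_i^0$, this is well-defined. I would then show $\psi_{k_{\ell^\ast}}(1) \leq h(1)$, which supplies the needed witness $j = k_{\ell^\ast}$. If $\ell^\ast = m$ the inequality holds with equality. Otherwise $k_{\ell^\ast+1} \notin S_i^0$, so $f_{k_{\ell^\ast+1}}$ strictly dips below $f_i$ somewhere on $P_i$; because $\vec{x}^\ast$ is generic interior, this forces $\psi_{k_{\ell^\ast+1}}(0) < h(0) = \psi_{k_{\ell^\ast}}(0)$, while $\psi_{k_{\ell^\ast+1}}$ catches up to $h$ at the transition time, so its slope exceeds that of $\psi_{k_{\ell^\ast}}$. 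Linearity then propagates $\psi_{k_{\ell^\ast}} \leq \psi_{k_{\ell^\ast+1}} = h$ past the transition and on to $t = 1$. The delicate part is making this slope comparison rigorous across \emph{chains} of transitions — in particular, ruling out degenerate segments that skim lower-dimensional cells and showing that the invariant survives iterated application when $\ell^\ast < m - 1$.

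Finally, to enforce the antichain condition $S_i \not\subseteq S_j$, I prune: whenever $S_a^0 \subsetneq S_b^0$, delete $S_b^0$. Since minimizing over a larger index set can only decrease the value pointwise, $\min_{j \in S_b^0} f_j \leq \min_{j \in S_a^0} f_j$, so removing $S_b^0$ leaves the outer maximum unchanged. Iterating until no proper containment remains and relabeling the survivors as $S_1, \ldots, S_q$ yields the representation asserted by the theorem.
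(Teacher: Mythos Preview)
The paper does not supply its own proof of this statement; it merely cites Ovchinnikov (ref.~\cite{ovchinnikov2002max}), so there is no in-paper argument to compare against. Your construction of the index sets $S_i^0$ and the line-segment strategy are in the spirit of the standard approach.

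That said, your segment argument has a gap beyond the chain-of-transitions issue you already flag. The inference ``$k_{\ell^*+1}\notin S_i^0$, so $f_{k_{\ell^*+1}}$ dips below $f_i$ somewhere on $P_i$; because $\vec{x}^\ast$ is generic interior, this forces $\psi_{k_{\ell^*+1}}(0)<h(0)$'' is not justified. Membership $j\notin S_i^0$ only tells you that $\{f_j<f_i\}\cap\mathrm{int}(P_i)$ is a nonempty relatively open half of $P_i$; there is no reason a single generic $\vec{x}^\ast$ must lie in \emph{all} such halves simultaneously (one for each $j\notin S_i^0$), and the particular index $k_{\ell^*+1}$ you meet depends on the choice of $\vec{x}^\ast$, so you cannot pick $\vec{x}^\ast$ after seeing it. Relatedly, when $\ell^*>0$ the equality $h(0)=\psi_{k_{\ell^*}}(0)$ that you invoke need not hold: since $k_{\ell^*}\in S_i^0$ you only get $\psi_{k_{\ell^*}}(0)\geq h(0)$. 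Both of these must be repaired before the slope comparison can be made rigorous, and they interact with the iterated-transition difficulty you acknowledge. A cleaner route is to define the index sets pointwise, $S(\vec{x})=\{j:f_j(\vec{x})\geq f(\vec{x})\}$, and prove directly that for any $\vec{x},\vec{y}$ some piece lies weakly above $f$ at $\vec{x}$ and weakly below $f$ at $\vec{y}$; this is the form in which Ovchinnikov's key lemma is usually stated, and it sidesteps the genericity issue.
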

\noindent
Rational linear functions are computable, e.g., $qX \rxn pY$ computes $y = \frac{p}{q}x$.
(We will revisit the computation of rational multiplication later in this work, in the context of neural network weight multiplication, and address the issue of using reactions with many reactants which is undesirable.)
Rational affine functions are also computable when the CRN has \emph{initial context} (initial concentrations of non-input species).
Then, the $\min$ and $\max$ modules allow a method for piecewise composition of the rational affine pieces according to Theorem~\ref{thm:ovchinnikov}.
Ultimately, the exact characterization of dual-rail, composable, stoichiometrically-defined CRN computable functions is the set of continuous piecewise rational affine functions~\cite{chen2014rate}.
Further, as we have shown how to compute $\min$, $\max$, and rational affine functions by composable, non-competitive CRNs, we have shown that restricting CRNs to be non-competitive does not restrict computational power.

While at first glance the functions computed seem rather limited since they are composed of rational affine pieces, they indeed can approximate arbitrary curves to any desired accuracy.\todom{CC: Also, figure out whose responsibility it is to explain more about affine.}
Further, their power is underwritten by the empirical power of ReLU neural networks, since such neural networks indeed compute only piecewise rational affine functions.
Thus we motivate the connection between CRNs and ReLU neural networks, and explore this connection in more detail in Sections~\ref{sec:rrelu} and~\ref{sec:brelu}.

\Section{\RReLU: Rational-Weight \ReLU Neural Networks}
\label{sec:rrelu}

\begin{figure*}[!t]
  \centering
  \includegraphics[scale=0.75]{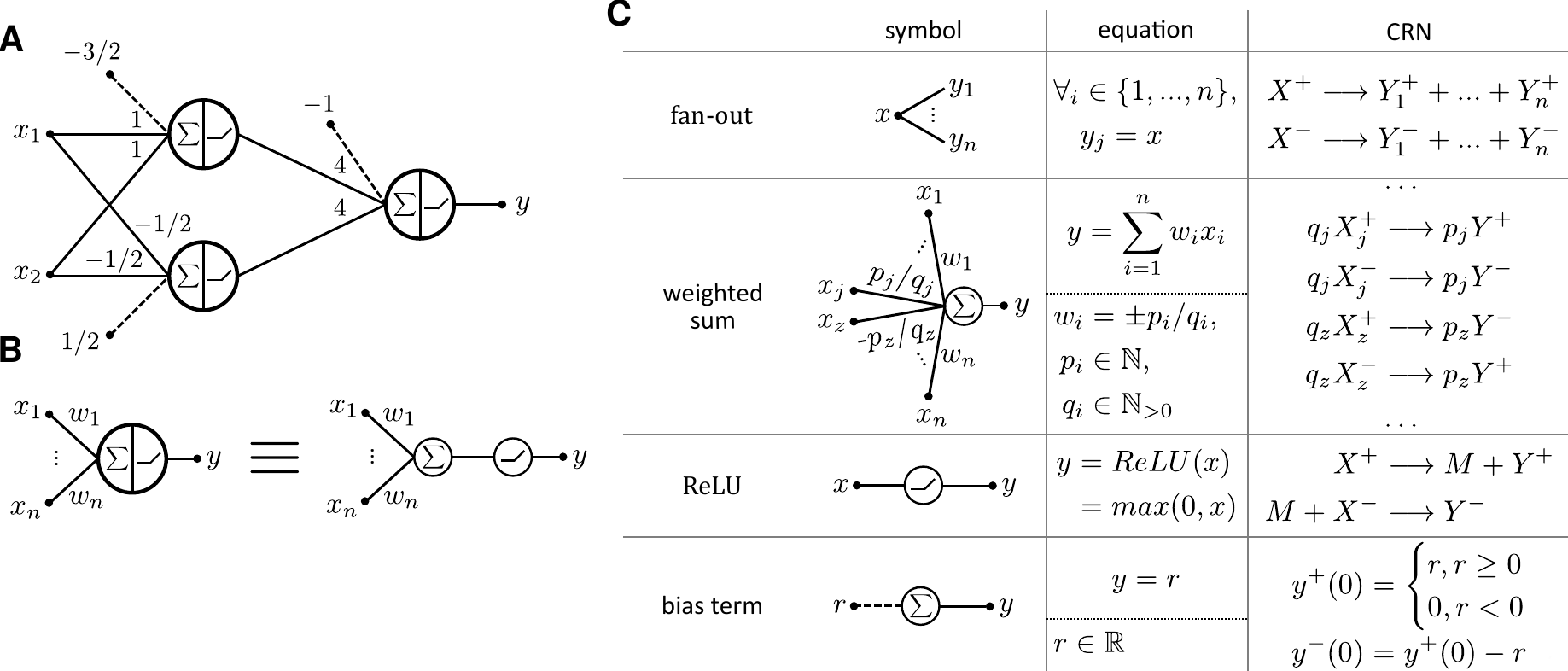}
  \caption{\textbf{CRN implementation of \RReLU neural networks.}
    \textbf{(A)} An example \RReLU network. 
    \textbf{(B)} Decomposition of a neuron into weighted summation and nonlinearity.
    \textbf{(C)} CRN implementation for each \RReLU network component.}
  \label{fig:neural_network}
\end{figure*}

In this and the subsequent section we develop constructions for implementing \ReLU neural networks with stoichiometrically-defined CRNs.
We start with broadly allowing arbitrary rational weights in this section,
and focus on binary weights in Section~\ref{sec:brelu}. 

Rational-Weight \ReLU neural networks (\RReLU) are neural networks with rational weights and \ReLU activation function.
Figure~\ref{fig:neural_network}A shows an example \RReLU neural network.
This network consists of an input layer, a single hidden layer and an output layer with \ReLU activation functions.
The output of the network is defined by: $ {y = \ReLU(\vec{W_2} \cdot \ReLU(\vec{W_1} \cdot \vec{x} + \vec{b_1}) + b_2)} $,
where $\vec{x} \in \mathbb{R}^2$ is an input vector, $\vec{W_1} \in \mathbb{Q}^{2 \times 2}$ is a weight matrix into the hidden layer, $\vec{b_1} \in \mathbb{R}^2$ is a vector of bias terms, $\vec{W_2} \in \mathbb{Q}^{1 \times 2}$ is a weight vector into the output layer with $b_2$ the corresponding bias term, and $y \in \mathbb{R}$ is the output\footnote{We assume all vectors to be column vectors, unless otherwise noted.}:
\[
\vec{W_1}=
  \begin{bmatrix}
    1  & 1 \\
    -1/2 & -1/2
  \end{bmatrix},
\vec{x}=
  \begin{bmatrix}
    x_1 & x_2
  \end{bmatrix}^\top,
\vec{b_1}=
  \begin{bmatrix}
    -3/2 & 1/2
  \end{bmatrix}^\top,
\]
\[
\vec{W_2}=
  \begin{bmatrix}
    4 & 4
  \end{bmatrix},
  b_2 = -1
\]
(Although the inputs and outputs are interpreted as real-value quantities, this particular network happens to compute the XNOR function: $y = \overline{x_1 \oplus x_2}$ if $0$ and $1$ values represent logical False and True.)

Figure~\ref{fig:neural_network}C shows an implementation of such \RReLU networks with composable, non-competitive CRNs.
Note that the different CRN modules (fan-out, weighted sum, and ReLU) are composed in a feedforward manner, where the outputs of the upstream modules are inputs for the downstream modules.
The feedforward structure of the modules allows us to analyze the system module by module, obtaining a path from the initial state to a static state.
We can then apply Theorem~\ref{thm:main}.

Fan-out---passing a value to multiple downstream neurons---is implemented by consuming the input species and producing $n$ output species ($n$ is equal to the fan-out degree), for both positive and negative inputs, as shown in Figure~\ref{fig:neural_network}C.
First apply the first reaction ($X^+ \rxn Y_1^+ + \dots + Y_n^+$) until completion.
This results in $y_i^+=x^+(0)$.
Then apply the second reaction ($X^- \rxn Y_1^- + \dots + Y_n^-$) until completion.
This results in $y_i^-=x^-(0)$, and thus $y_i=y_i^+-y_i^-=x^+(0)-x^-(0)=x(0)$.
Since this is a static state of the fan-out module,
by Theorem~\ref{thm:main} this CRN computes fan-out.

Weighted sum---combining outputs of multiple predecessor neurons by multiplying them with weight (rational number) and summing up the values---is implemented by controlling the stoichiometry of input and output species as shown in Figure~\ref{fig:neural_network}C.
Consider the contribution to the weighted sum by the reaction $q_j X_j^+ \rxn p_j Y^+$.
Running this reaction till completion, $x_j^+(0)$ amount of input is consumed to produce $(p_j/q_j)x_j^+(0)$ amount of the output.
The negative input and output species in reaction $q_j X_j^- \rxn p_j Y^-$ work similarly.
The total contribution to the output species is $(p_j/q_j)(x_j^+(0)-x_j^-(0))=(p_j/q_j)x_j(0)$.
Similar reactions are included for the other input species of the weighted sum (note that positive and negative species are flipped in the case of a negative-signed weight), which results in reaching a static equilibrium where the total contribution to the output species is equal to the weighted sum of the inputs.

\begin{figure}[!t]
  \centering
      \includegraphics[scale = 0.8]{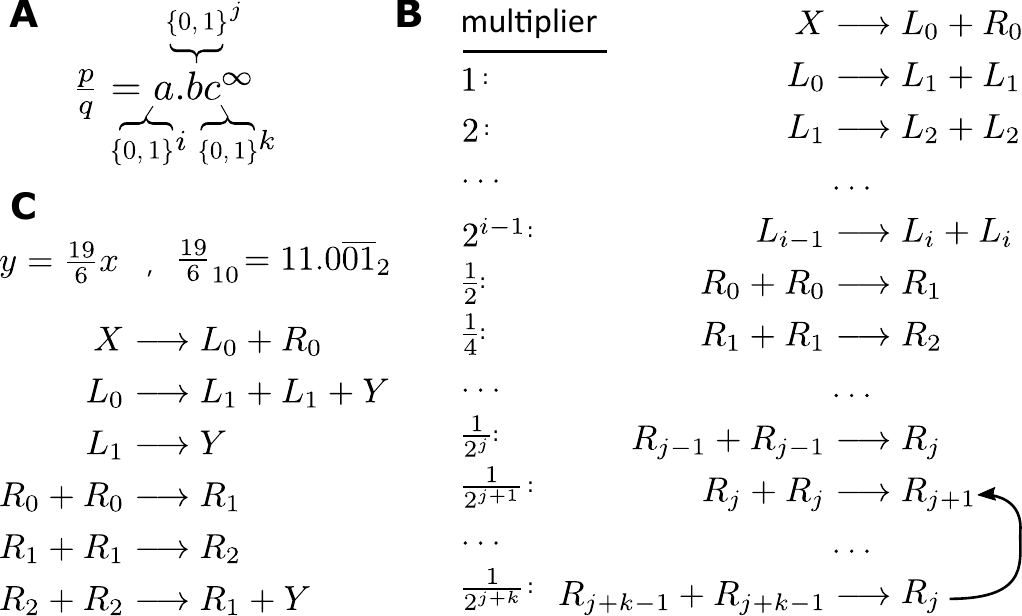}
  \caption{\textbf{Non-competitive bimolecular rational multiplication. (A)} Binary representation of rational $\frac{p}{q}$ where $a$, $b$, and $c$ are binary strings. Strings $a$ and $b$ are length $i$ and $j$, respectively, while $c$ is an infinitely repeated string of length $k$. \textbf{(B)} A scheme for constructing a non-competitive bimolecular CRN for rational multiplication. The reaction chain uses $i+j+k+1$ reactions to ``implement'' the binary expansion. The last reaction creates a ``loop'' in the reaction chain which corresponds to string $c$. The number of times each reaction will be applied (before looping) is some multiple of the initial count of $X$, as indicated by the multiplier column. An output species $Y$ will appear as a product for each reaction where a 1 appears in the binary expansion. \textbf{(C)} An example CRN which computes $y = \frac{19}{6}x$. %
  \emph{Note:} To achieve a dual-rail representation we can repeat this construction twice, for both positive ($X^{+}$, $Y^{+}$) and negative ($X^{-}$, $Y^{-}$) input and output species.}
  \label{fig:rational_crn}
\end{figure}
\todom{CC: In fan-out on Fig.6 part C, the $y_j$ should be $y_i$.}

While rational weight multiplication is easily computable through stoichiometry as above (e.g., $qX \rxn pY$ computes $y = \frac{p}{q}x$), the use of many reactants is undesirable as discussed in Section~\ref{sec:CRNs}.%
We can use the scheme shown in Figure~\ref{fig:rational_crn} for rational weight multiplication using only non-competitive uni- and bimolecular reactions. 
Using reactions of the form $L_0 \rxn L_1 + L_1$ and $R_0 + R_0 \rxn R_1$ we can double and halve the concentration of a species, respectively. 
In this way, a set of reactions may mimic the binary expansion of a given rational $\frac{p}{q}$, generating an output species $Y$ for each $1$ bit in the binary representation. 
If the rational number has an infinitely repeating portion in its binary expansion, our CRN uses a final reaction which ``loops'' back to a previous reaction.
Figure~\ref{fig:rational_crn}c shows a concrete example of this.
A detailed proof of correctness for this construction may be found in SI Appendix~\ref{sec:BRMCRNs_proofs}.
The proof shows a path from a state with concentration $x$ of the input species to a state at static equilibrium with concentration $\frac{p}{q}x$ of the output species.
By Theorem~\ref{thm:main} (and the fact that this CRN is non-competitive), this is sufficient to show that the construction computes $\frac{p}{q}x$.
To satisfy the dual-rail representation, the construction is repeated for both the positive $X^+$ and negative $X^-$ species.
Since this CRN is composable, it may be used for the weighted sum by creating similar reaction chains for all input species.

\ReLU is implemented with two reactions shown in Figure~\ref{fig:neural_network}~\footnote{Enumeration of small CRNs shows that this is the simplest stoichiometrically-defined, composable CRN computing \ReLU in the sense that \ReLU cannot be computed in this manner with fewer than 2 reactions or 5 species~\cite{vasic2020crnsexposed}.}.
We will show a particular line-segment path that leads to a static equilibrium computing \ReLU,
which by Theorem~\ref{thm:main} implies that the CRN computes the \ReLU.
Consider at first applying the first reaction ($X^+ \rxn M + Y^+$) as long as $X^+$ is present.
This results in: $y^+ = m = x^+(0)$ and $x^+=0$.
Then, consider applying the second reaction ($M + X^- \rxn Y^-$) until completion.
The second reaction will execute for $\min(m, x^-)= \min(x^+(0), x^-(0))$.
This results in: $y^- = min(x^+(0), x^-(0))$ and $m = 0 \lor x^- = 0$.
The output of the CRN is then: $y = y^+ - y^- = x^+(0) - \min(x^+(0), x^-(0)) = \max(x^+(0)-x^-(0),0) = \text{ReLU}(x(0))$.
Also, it holds that $x^+=0$ and $m = 0 \lor x^- = 0$;
from which it follows that at least one reactant of both reactions is zero, thus the static equilibrium is reached.
From Theorem~\ref{thm:main} it follows that the CRN computes \ReLU.

Finally, bias terms are implemented by setting the initial concentrations of the corresponding species to the dual-rail value of the bias terms.

To see that the composed modules converge, note that we have shown that each module is composable as in Definition~\ref{def:composable}, and further that since each module is non-competitive, the entire network is non-competititve.
Therefore, applying reactions maximally module-by-module, layer-by-layer gives a straightforward path in the nondeterministic kinetic model from the initial state to a static state with the output equal to the output of the neural network.
Theorem~\ref{thm:main} then argues that the CRN converges correctly under mass-action kinetics or any fair rate law. 
We show an example \RReLU neural network and its complete CRN implementation in SI Appendix~\ref{sec:appendix:rrelu}.

\Section{\BReLU: Binary-Weight \ReLU Neural Networks}
\label{sec:brelu}

Binary-Weight \ReLU neural networks (\BReLU) are neural networks with binary weights ($\pm 1$) and \ReLU activation function.
Since they are a subclass of \RReLU networks, the same translation procedure as illustrated for \RReLU applies.
\BReLU networks were popularized in the machine learning community due to the computational speed-ups they bring (they eliminate the need for a large portion of multipliers which are the most space and power hungry components of specialized deep learning hardware), while at the same time preserving the performance~\cite{courbariaux2015binaryconnect}.
From the angle of CRNs, computing rational weights $\frac{p}{q}\vec{x}$ in dual-rail requires either two reactions with many reactants or many reactions with at most two reactants, neither of which is desirable.\todom{CC: I changed this sentence, since the previous version didn't mention Austin's construction}
Thus, \BReLU networks are a better suited class of neural networks for CRNs than \RReLU, producing CRNs that are easier to implement in a wet lab.
In other words, restriction to binary weights simplifies both silicon- and chemical-hardware implementations of deep learning while maintaining performance.\todom{CC: I added this sentence, please check.}

Note that the fan-out and weighted sum can be merged into a single step since \BReLU networks have $\pm 1$ weights.
Thus, by default, the fan-out and weighted sum of \BReLU networks is implemented using a reaction set similar to the fan-out module in Figure~\ref{fig:neural_network}, with the difference that the $\pm$ signs of the output species are flipped in the case of negative weight.

\Subsection{Translation optimization}\label{sec:optimization}
We find that unimolecular reactions of non-competitive CRNs, such as the first reactions of \ReLU modules, can be eliminated from the CRN by altering the bimolecular reactions and the initial concentrations of the CRN species, a process which we describe next.
Unimolecular reactions are those with exactly one reactant like $A \rxn B + C$.
Whenever $A$ is produced in another reaction, we can replace it with $B + C$.
For example, if there is another reaction $X \rxn A + B$, we replace the reaction with $X \rxn 2B+C$.
Further, we adjust the initial concentrations of the product species ($B$ and $C$) by increasing them by the initial concentrations of the reactant ($A$).
Importantly, this transformation works only if $A$ is not a reactant in any other reaction;
for example, if there were another reaction like $X + A \rxn Y$, it is not clear what to replace instances of $A$ with, and indeed it is not possible to remove the unimolecular reaction in that case.
Luckily, our constructions are non-competitive and we are able to show that for non-competitive CRNs the optimization does not affect the state of convergence (SI Appendix~\ref{sec:optimization_proof}).
The optimization procedure is illustrated in Figure~\ref{fig:optimization}.

\begin{figure*}[!t]
  \centering
  \includegraphics[scale=0.55]{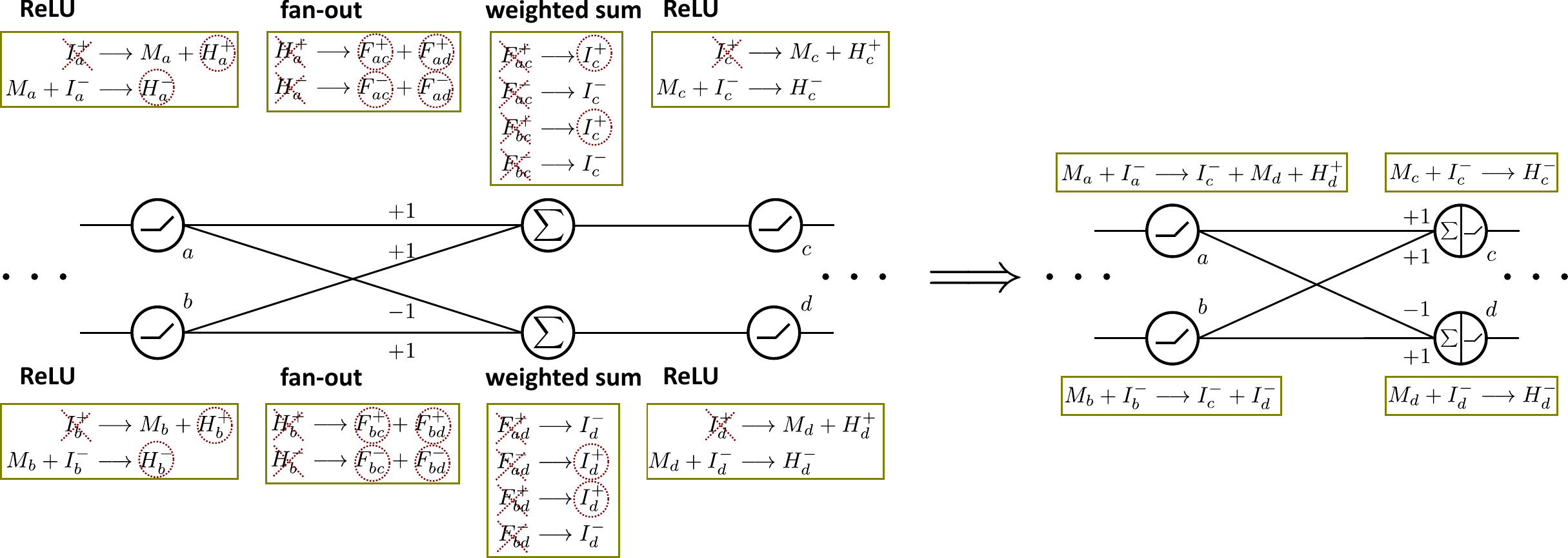}
  \caption{\textbf{CRN optimization procedure.}
    \textbf{(Left)} Neural network and its corresponding CRN before the optimization.
    \textbf{(Right)} Neural network and its corresponding CRN after the optimization.}
  \label{fig:optimization}
\end{figure*}

\RReLU networks allow for the optimization of fan-out modules, partial optimization of \ReLU modules (only the unimolecular reaction) and weighted sum modules only in the cases where the weight denominator is equal to $1$ (integer weights).
\BReLU networks in addition allow optimization of weighted sum modules in all cases.
Note that the unimolecular reactions corresponding to the input species are not optimized in order not to alter the input to the system.
The CRN resulting from the optimization of a \BReLU network thus has the property that there are no unimolecular reactions besides the input layer, for which there are two reactions per input.
In other words, the CRN of a \BReLU network consists of (a) a bimolecular reaction per \RReLU node, and (b) two unimolecular reactions per input of the neural network.

Optimization of some adversarial ReLU networks results in reactions with a number of products exponential in the depth of the network.
Understanding the scaling of the number of products is an important avenue for future work to ensure feasible CRNs.

\begin{figure}[!t]
  \centering
  \includegraphics[scale=0.8]{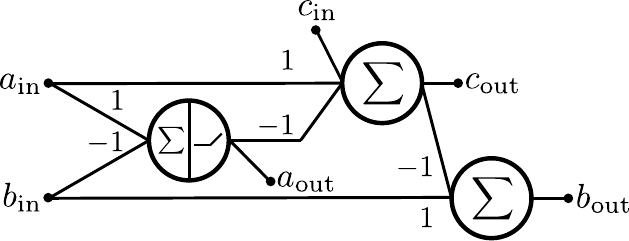}
  \caption{\textbf{A composable binary-weight ReLU network simulating a chemical reaction $A + B \rxn C$.}
  Given any vector $\vec{x}$ of initial concentrations of species $A, B,$ and $C$, the equilibrium state $\vec{b}$ of the reaction $A + B \rxn C$ has $\vec{b}(A) = \vec{a}(A) - \min(\vec{a}(A), \vec{a}(B))$, $\vec{b}(B) = \vec{a}(B) - \min(\vec{a}(A), \vec{a}(B))$, and $\vec{b}(C) = \min(\vec{a}(A), \vec{a}(B)$).}
  \label{fig:chelu_gadget}
\end{figure}

\Subsection{BReLU networks simulate CRNs}
We have seen that non-competitive CRNs can compute any function computed by a \BReLU network where each reaction (except for the input layer reactions) corresponds to one \BReLU node.
One interpretation of this is that CRNs efficiently simulate \BReLU networks.
A natural question is the converse: can any CRN be efficiently simulated by a \BReLU network?
In this subsection we answer this question at least for a subclass of CRNs which we call $\emph{\CheLU}$ networks, showing that they can be simulated by \BReLU networks with one ReLU node per reaction.

First we define a subclass of CRNs as the target to be simulated.
The first restriction is that reactions have at most two reactants (reactions with more than two reactants are anomalous as discussed in Section~\ref{sec:CRNs}).
The second restriction is that the CRN is \emph{feed-forward}. 
This can be formalized by saying that there is a total ordering on reactions such that products of a reaction cannot be reactants of a reaction earlier in the ordering.
The third restriction is that every species appears at most once per reaction.
Intuitively, this restriction is placed because a reaction like $X + X \rightarrow \dots$ essentially halves the signal of $X$, which has no analog in binary-weight neural networks.
Lastly, we restrict the CRNs to be non-competitive.
For their connection to \BReLU networks, we call this class of CRNs \emph{\CheLU networks}.

We next define what is meant by simulation of \CheLU networks by \BReLU networks.
Of course, \BReLU networks have no sense of kinetics or dynamics.
For this reason we disregard kinetics and instead focus on initial and equilibrium states of the \CheLU network, and mapping those states to inputs and outputs of a \BReLU network.
Formally, if a CRN has one equilibrium state, we say a ReLU neural network simulates that CRN if, for all initial states $\vec{a}$, the equilibrium state $\vec{b}$ given $\vec{a}$ is equal to the output vector of the ReLU neural network given $\vec{a}$ as input.

We give a small, composable \BReLU network (Figure~\ref{fig:chelu_gadget}) which simulates a single \CheLU reaction.
Composing this small network to simulate larger \CheLU networks is straightforward since we restrict \CheLU networks to be feed-forward.
The \BReLU network uses one ReLU node and two summation nodes per reaction, although the summation nodes can be removed with the clever addition of more edges to achieve one ReLU node per reaction.

Thus, \BReLU networks and \CheLU networks simulate each other, one node per reaction and vice versa, and so efficient networks in one model transfer to the other.
Although \CheLU networks at first seem restricted, the empirical power shown of \BReLU networks implies that \CheLU networks are a rich and powerful class of CRNs, whose restrictions make them easy targets for implementation by synthetic means.

\Section{Simulations}
In this section we describe numerical experiments showcasing compilation from \BReLU neural networks to CRNs.
We train \BReLU networks on IRIS~\cite{fisher1936use,anderson1936species}, MNIST~\cite{lecun1998gradient}, virus infection~\cite{GSE73072}, and pattern formation datasets.
We translate trained neural networks to CRNs following our compilation technique (Figure~\ref{fig:neural_network}),
and simulate the reactions' behavior under mass-action kinetics using an ODE simulator~\cite{CRNSimulatorPackage}.
Our main goal is to show the equivalence of a trained neural network and compiled CRN, and not to improve accuracy of ML models, which is orthogonal to our work.

\Subsection{IRIS}
\hfill

\textbf{Dataset}.
The IRIS dataset consists of $150$ examples of $3$ classes of flowers (Setosa, Versicolor or Virginica), and $4$ features per example (sepal length and width, and petal length and width).
Considering a small dataset size ($150$ examples), and that our primary goal is to show the equivalence of a neural network and the compiled CRN, we train and evaluate on the whole IRIS dataset.

\textbf{Results}.
We train a neural network with a single hidden layer consisting of $3$ units, $4$ input units (capturing the features of IRIS flowers), and $3$ output units where the unit with the highest value determines the output class.
We achieve accuracy of $98$\% ($147$ out of $150$ examples correctly classified) with a trained BinaryConnect neural network.
In the resulting network, $5$ weights out of $21$ total weights are zero-valued.
\todom{DS: Explain why zero weights are important.}
We translate the network to the equivalent CRN consisting of $18$ chemical reactions (unoptimized compilation), or $9$ chemical reactions (optimized compilation).
We simulate both versions of CRNs and confirm that their outputs (labels) match the outputs of the neural network in all of the $150$ examples.

\Subsection{MNIST}
\hfill

\textbf{Dataset}.
The MNIST dataset consists of labeled handwritten digits, where features are image pixels, and labels are digits ($0$ to $9$). 
We split the original MNIST training set consisting of $60,000$ images into $50,000$ for the training set, and $10,000$ for the validation set.
We use the original test set consisting of $10,000$ images.
In a preprocessing stage we center the images (as done in the BinaryConnect work). 
Additionally, aiming at a smaller neural network and CRN, we scale the images down from $28 \times 28$ to $14 \times 14$.

\textbf{Results}.
We train a neural network with one hidden layer of $64$ units.
The neural network has $14^2$ input units (one per pixel), and we use $10$ output units (for digits $0$ to $9$). 
We train the neural network to maximize the output unit corresponding to the correct digit.
Our model achieves accuracy of $93.92$\% on the test set,
In the resulting model $23$\% of weights are zero.
Note that we did not focus on achieving high accuracy; BinaryConnect in original paper achieves accuracy of over $98$\%,
but uses more hidden layers and units ($3$ layers with $1024$ units each).
Instead we used fewer units in order to produce a smaller neural network and CRN.
We translate the network to an equivalent CRN consisting of $648$ chemical reactions (unoptimized compilation), and $456$ chemical reactions (optimized compilation).
The CRN consists of $2 \cdot 14^2$ input species (two species per input unit encoding positive and negative parts),
and similarly $2 \cdot 10$ output species.
We simulate the CRN on $100$ randomly chosen examples from the test set, and confirm that output matches that of the neural network in all of the cases.

\Subsection{MNIST Subset}
\hfill

\textbf{Dataset}.
With a goal of creating a smaller network we trained a model on a subset of the MNIST dataset (only digits $0$ and $1$).

\textbf{Results}.
We train a network with $1$ hidden layer with $4$ units.
We now scaled images to $8 \times 8$, using a neural network with $8^2$ input units and $2$ output units.
Our model achieves accuracy of $98.82$\% on the test set.
In the resulting model $23$\% of weights are set to zero.
The resulting CRN consists of $140$ reactions (unoptimized compilation), and $128$ reactions (optimized compilation).

\Subsection{Virus Infection}
\hfill

\textbf{Dataset}.
For the virus infection classifier, we used data from NCBI GSE73072~\cite{GSE73072}.
The dataset contains microarray data capturing human gene expression profiles, with the goal of studying four viral infections: H1N1, H3N2, RSV, and HRV (labels).
There are $148$ patients in the dataset, each with about $20$ separate profiles taken at different times during their infection period, for a total of $2,886$ samples.
The dataset contains information about which patient was infected and during which point of time.
We filter the samples leaving only those that correspond to an active infection, and thus make the data suitable for classification of the four viruses.
Finally, we have in total $698$ examples, split in $558$ for training, $34$ for validation, and $104$ for testing.
Each sample measures expression of $12,023$ different genes (features); we use the $10$ most relevant genes as features which are selected using the GEO2R tool~\cite{GEO2R} from the NCBI GEO.

\textbf{Results}.
We train a neural network with one hidden layer with $8$ units, $10$ input units capturing the expression of different genes, and $4$ output units classifying between virus infections.
We achieve test set accuracy of $98.08$\%.
In the resulting model $32$\% of weights are zero.
We translate the network to the equivalent CRN consisting of $52$ chemical reactions (unoptimized compilation), or $28$ chemical reactions (optimized compilation).
We simulate the CRN on $100$ randomly chosen examples from the test set, and confirm that output matches that one of the neural network in all of the cases.

\Subsection{Pattern Formation}
\hfill

\begin{figure*}[!t]
  \centering
  \includegraphics[scale=0.75]{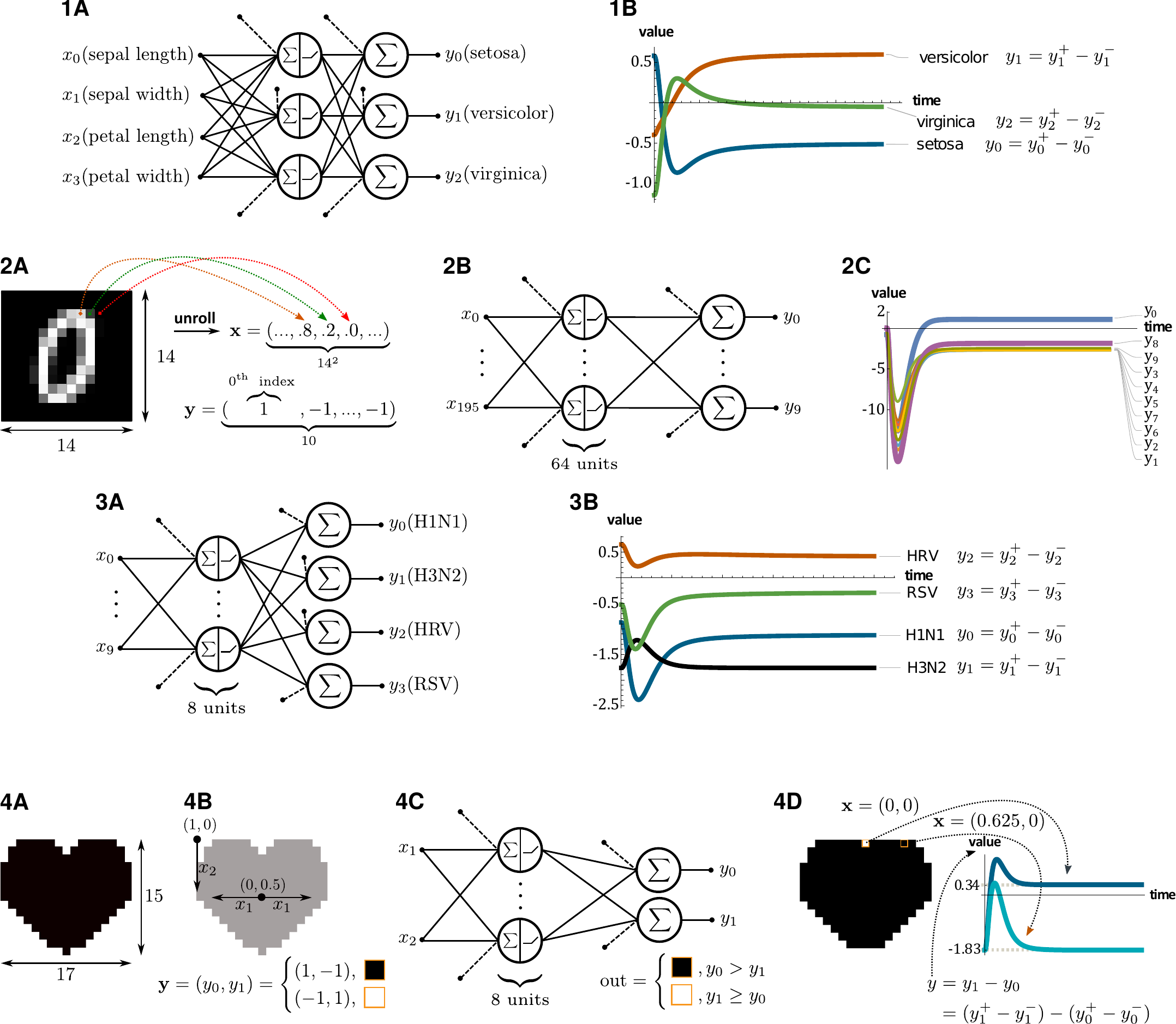}
  \caption{\textbf{Neural network architecture, input/output encoding, and CRN simulations for different datasets.}
    \textbf{(1)} IRIS.
    \textbf{(1A)} Neural network architecture.
    \textbf{(1B)} CRN simulation results.
    \textbf{(2)} MNIST.
    \textbf{(2A)} Input image and its input/output encoding.
    Each image from the MNIST dataset is unrolled into a vector, and the output label is represented as a 10D vector.
    \textbf{(2B)} Neural network architecture.
    \textbf{(2C)} CRN simulation results for the input shown in 2A.
    \textbf{(3)} Virus Infection.
    \textbf{(3A)} Neural network architecture.
    \textbf{(3B)} CRN simulation results.
    \textbf{(4)} Pattern formation.
    \textbf{(4A)} Image used to construct a dataset.
    \textbf{(4B)} Input and output encoding for a position (pixel) in the input image.
    An input is encoded using 2D coordinates: ($x_1$) symmetric horizontal coordinates (starting in the image center) and ($x_2$) vertical coordinates starting from the top left edge of the image.
    An output, which can be either black or white pixel, is encoded as a 2D vector as shown in the figure.
    \textbf{(4C)} Neural network architecture.
    \textbf{(4D)} Image learned by the neural network, and CRN simulation results for $2$ input values (positions).
  }
  \label{fig:subjects}
\end{figure*}

\textbf{Dataset}.
We construct the dataset from the image shown in Figure~\ref{fig:subjects}.
For each pixel, we create a training example with $(x_1,x_2)$ coordinates as input and a label representing value of the pixel.
Input $x_1$ represents the horizontal distance from the center of the image, and $x_2$ represents the vertical distance from the top left corner of the image.
The value of the label is $0$ if the pixel is black and $1$ if white.
The dimensions of the figure are $17 \times 15$; thus there are $255$ examples in the dataset.

\textbf{Results}.
We train a neural network with one hidden layer containing $8$ units, $2$ input units for specifying the location in the coordinate system, and $2$ output units classifying the input location (pixel) as a black or white.
We achieve test set accuracy of $98.44$\% ($4$ out of $255$ pixels are misclassified).
Both original and learned image are shown in Figure~\ref{fig:subjects}.
Note that test and training set are same, as the goal in this task is to overfit to the training set (image).
In the resulting model $15.62$\% of weights are set to zero ($5$ out of $32$ weights).
We translate the network to the equivalent CRN consisting of $36$ reactions (unoptimized compilation), and $13$ reactions (optimized compilation).
We simulate the CRN on all inputs, and confirm that output matches that one of the neural network in all of the cases.

\Subsection{Training Specifics}
\hfill

We use the implementation of BinaryConnect networks published by the authors of the original work~\cite{courbariaux2015binaryconnect},
and follow the same training procedure except for the following:
(1) We focus solely on the \ReLU activation function since other activation functions such as sigmoid, hyperbolic tangent, and softmax are not continuous piecewise linear and thus cannot be implemented with rate-independent CRNs~\cite{chen2014rate}.
(2) We add support for $0$ weights by discretizing the real valued weight to zero if it is in the range $[-\tau, \tau]$; where for $\tau$ we used $0.15$.
(3) We do not use batch normalization~\cite{ioffe2015batch}.
Batch normalization would incur multiplication and division operations at the inference stage (training stage is not a problem) that would be hard to efficiently implement in CRNs.
Instead, we rely on Dropout~\cite{srivastava2014dropout} (stochastically dropping out units in a neural network during training) as a regularization technique.
In all our experiments we use the square hinge loss (as used in BinaryConnect) with ADAM optimizer.

We train on IRIS dataset for $10,000$ epochs, batch size $16$ and return the best performing epoch.
We train on MNIST dataset for $250$ epochs, batch size $100$, measuring the validation accuracy at each epoch, and returning the model that achieved the best validation accuracy during training.
For the MNIST subset dataset we use same number of epocs and batch size.
We train on the virus infection dataset for $200$ epochs, batch size $16$, and return the model that achieved the best validation set accuracy.
We train on the pattern formation dataset for $50,000$ epochs, and batch size of $255$.
We use an exponentially decaying learning rate.
The rate constants of all reactions are set to $1$, and all chemical simulations are performed for $50$ arbitrary time units in the CRNSimulator package~\cite{CRNSimulatorPackage}.

\Section{Related Work}

A brief conference version of this work focused on the binary-weight ReLU network implementation~\cite{vasic2020deep}.
In this full version, we introduce the machinery of non-competitive CRNs allowing for proofs of correctness, the general construction for rational weight ReLU networks, and the inverse construction showing simulation of CRNs by ReLU networks.

\todom{cite CRNs exposed paper}
Prior work has studied a number of properties of CRNs that arise from stoichiometry alone and are independent of rates~\cite{clarke1988stoichiometric,feinberg2019foundations}.
In the context of using CRNs to perform computation, computation by stoichiometry~\cite{chen2014rate} was directly motivated by the notion of stable computation in population protocols~\cite{angluin2006computation}.
Other notions of nearly rate-independent computation involved a coarse separation into fast and slow reactions~\cite{senum2011rate}.

\todom{DS: Point out that rational multiplication is not feed-forward even according to feed-forward definition in revised rate-independent paper. [BUT: Since the feed-forward part isn't published yet, we can't say anything about it yet. Leave todo for later.]}

Recent work took a different but related approach to formalizing and verifying rate independence~\cite{degrand2020graphical}. 
They considered a broad class of rate functions and identified three easy-to-check conditions that force convergence to the same point under any rate function in this class. 
Specifically, they showed that it is sufficient for the CRN to be synthesis-free,
loop-free,
and fork-free.
The first condition means that every reaction decreases some species, the second condition is equivalent to our feedforward condition, and the last is a more restricted version of non-competition.
\todom{When we formally define non-competitive, we should explicitly contrast it with this paper}
\todom{We should also point out that our conditions are not necessary.}
Although most of the constructions in this paper satisfy the above conditions, 
our construction for implementing rational multiplication with bimolecular reactions (Fig.~\ref{fig:rational_crn}) does not satisfy the loop-free (feedforward) condition and is thus not amenable to this analysis.

The connection between CRNs and neural networks has a long history.
It has been observed that biological regulatory networks may behave in manner analogous to neural networks.
For example, both phosphorylation protein-protein interactions~\cite{hellingwerf1995signal,bray1995protein} and transcriptional networks~\cite{buchler2003schemes} can be viewed as performing neural network computation.
\todom{[...]}
Hjelmfelt et al~\cite{hjelmfelt1991chemical} proposed a binary-valued chemical neuron, whose switch-like behavior relies on competition between excitation and inhibition. 
More recently, Moorman et al~\cite{moormandynamical2019} proposed an implementation of \ReLU units based on a fast bimolecular sequestration reaction which competes with unimolecular production and degradation reactions. 
Recently, Anderson et al~\cite{anderson2020reaction} developed a different mass-action CRN for computing the ReLU and smoothed ReLU function.

In contrast to the prior work, 
our implementation relies solely on the stoichiometric exchange of reactants for products, and is thus completely independent of the reaction rates.
Our CRN is also significantly more compact, 
using only a single bimolecular reaction per neuron, with two species per every connection (without any additional species for the neuron itself).

We use neural networks as a way to program chemistry.
The programming is done offline in the sense that neural networks are trained in silico.
However, there is a body of work on creating chemical systems that are capable of learning in chemistry~\cite{chiang2015reconfigurable,blount2017feedforward}.
Although these constructions are much more complex than ours, and arguably difficult to realize, they demonstrate the proof-of-principle that chemical interactions such as those within a single cell are capable of brain-like behavior.

Besides the above mentioned theoretical work on chemical neural networks,
wet-lab demonstration of synthetic chemical neural computation argues that 
the theory is not vapid and that neural networks could be realized in chemistry.
A chemical linear classifier reading gene expression levels could perform basic disease diagnostics~ 
\cite{lopez2018molecular}.
Larger systems based on strand displacement cascades were used to implement Hopfield associative memory~\cite{qian2011neural},
and winner-take-all units to classify MNIST digits~\cite{cherry2018scaling}.
Interestingly, the direct strand displacement implementation of a neuron by our construction is significantly simpler (in terms of the number of components needed) than the previous laboratory implementations, arguing for its feasibility.

\todoi{To add:
Discrete CRNs, connect to Winfree probabilistic inference}

\Section{Conclusion}

While computation in CRNs typically depends on reaction rates, 
rate-independent information processing occurs in the stoichiometric transformation of reactions for products.
In order to better program such computation, we advance non-competition as a useful property, allowing us to analyze an infinite continuum of possible, highly parallel trajectories via a simple sequential analysis.
We further demonstrate embedding complex information processing in such rate-independent CRNs by mimicking neural network computation.
For binary weight neural networks, 
our construction is surprisingly compact in the sense that we use exactly one reaction per ReLU node.
This compactness argues that neural networks may be a fitting paradigm for programming rate-independent chemical computation.

As proof of principle, we demonstrate our scheme with numerical simulations of traditional machine learning tasks (IRIS and MNIST), 
as well as tasks better aligned with potential biological applications (virus identification and pattern formation).
The last two examples rely on chemically available information for input, and thus argue for the potential biological and medical utility of programming chemical computation via a translation from neural networks.

While numerical simulations confirm convergence to the correct output, 
further work is needed to study the speed of convergence.
How does the speed vary with the complexity and structure of the CRN and the corresponding neural network?
As an example of how such convergence speed might be analyzed,
prior work showed that, e.g., $90\%$-completion time scales quadratically with the number of layers in the network if it logically represents a tree of bimolecular reactions~\cite{seelig2009time}.

Although in principle arbitrary CRNs can be implemented using DNA strand displacement reactions,
current laboratory demonstrations have been limited to small systems~\cite{srinivas2017enzyme},
and many challenges remain in constructing large CRNs in the laboratory.
Rate independent CRNs possibly offer an attractive implementation target due to their absolute robustness to reaction rates.

Only three kinds of computing hardware are currently widespread: electronic computers, living brains, and chemical regulatory networks, the last occurring within every cell in every living organism. 
Given the society-changing success of electronic computers and the recent neural networks revolution inspired by computation in the brain,
it may be argued that chemical computation is the least understood of the three.
Upon the refinement of theoretical principles and experimental methods, 
the impact of chemical computation could be felt in far-reaching ways in synthetic biology, medicine, and other fields.
Chemical computation by stoichiometry, and methods of programming and training such computation developed here, provide a distinct approach to bottom-up engineering of molecular information processing.

\todoi{DS: Somewhere add the possibility of adding cancellation reactions in the middle of the circuit so that $X^+$ and $X^-$ separately don't get too large [cite Georg's paper when it comes out]} %

\acknow{This work was supported by NSF grant CCF-1901025 to DS, and CCF-1718903 to SK. We thank David Doty and Erik Winfree for essential discussions.}

\showacknow{} %

\bibliography{bib}

\begin{thebibliography}{10}

\bibitem{epstein1998introduction}
IR Epstein, JA Pojman, {\em An introduction to nonlinear chemical dynamics:
  oscillations, waves, patterns, and chaos}.
\newblock (Oxford University Press), (1998).

\bibitem{chen2014rate}
HL Chen, D Doty, D Soloveichik, Rate-independent computation in continuous
  chemical reaction networks in {\em Proc. of the 5th {C}onference on
  {I}nnovations in {T}heoretical {C}omputer {S}cience}.
\newblock (2014).

\bibitem{vasic2020deep}
M Vasic, C Chalk, S Khurshid, D Soloveichik, {D}eep {M}olecular {P}rogramming:
  A natural implementation of binary-weight {R}e{LU} neural networks in {\em
  Proceedings of the 37th International Conference on Machine Learning},
  Proceedings of Machine Learning Research, eds.{} HD III, A Singh.
\newblock (PMLR), Vol.{} 119, pp. 9701--9711 (2020).

\bibitem{soloveichik2010dna}
D Soloveichik, G Seelig, E Winfree, {DNA} as a universal substrate for chemical
  kinetics.
\newblock {\em\protect\JournalTitle{Proceedings of the National Academy of
  Sciences}} \textbf{107}, 5393--5398 (2010).

\bibitem{chen2013programmable}
YJ Chen, et~al., Programmable chemical controllers made from {DNA}.
\newblock {\em\protect\JournalTitle{Nature nanotechnology}} \textbf{8}, 755
  (2013).

\bibitem{srinivas2017enzyme}
N Srinivas, J Parkin, G Seelig, E Winfree, D Soloveichik, Enzyme-free nucleic
  acid dynamical systems.
\newblock {\em\protect\JournalTitle{Science}} \textbf{358}, eaal2052 (2017).

\bibitem{degrand2020graphical}
E Degrand, F Fages, S Soliman, Graphical conditions for rate independence in
  chemical reaction networks in {\em International Conference on Computational
  Methods in Systems Biology}.
\newblock (Springer), pp. 61--78 (2020).

\bibitem{courbariaux2015binaryconnect}
M Courbariaux, Y Bengio, JP David, {BinaryConnect}: Training deep neural
  networks with binary weights during propagations in {\em Advances in {N}eural
  {I}nformation {P}rocessing {S}ystems}.
\newblock (2015).

\bibitem{santos2019using}
J Santos-Moreno, Y Schaerli, Using synthetic biology to engineer spatial
  patterns.
\newblock {\em\protect\JournalTitle{Advanced Biosystems}} \textbf{3}, 1800280
  (2019).

\bibitem{fujii2013predator}
T Fujii, Y Rondelez, Predator--prey molecular ecosystems.
\newblock {\em\protect\JournalTitle{ACS Nano}} \textbf{7}, 27--34 (2013).

\bibitem{angluin2006computation}
D Angluin, J Aspnes, Z Diamadi, MJ Fischer, R Peralta, Computation in networks
  of passively mobile finite-state sensors.
\newblock {\em\protect\JournalTitle{Distributed computing}} \textbf{18},
  235--253 (2006).

\bibitem{petri1966communication}
CA Petri, {\em Communication with automata}.
\newblock (1966).

\bibitem{karp1969parallel}
RM Karp, RE Miller, Parallel program schemata.
\newblock {\em\protect\JournalTitle{Journal of Computer and system Sciences}}
  \textbf{3}, 147--195 (1969).

\bibitem{chen2014deterministic}
HL Chen, D Doty, D Soloveichik, Deterministic function computation with
  chemical reaction networks.
\newblock {\em\protect\JournalTitle{Natural computing}} \textbf{13}, 517--534
  (2014).

\bibitem{chalk2019composable}
C Chalk, N Kornerup, W Reeves, D Soloveichik, Composable rate-independent
  computation in continuous chemical reaction networks.
\newblock {\em\protect\JournalTitle{IEEE/ACM Transactions on Computational
  Biology and Bioinformatics}} \textbf{18}, 250--260 (2021).

\bibitem{ovchinnikov2002max}
S Ovchinnikov, Max-min representation of piecewise linear functions.
\newblock {\em\protect\JournalTitle{Contributions to Algebra and Geometry}}
  \textbf{43}, 297--302 (2002).

\bibitem{vasic2020crnsexposed}
M Vasic, D Soloveichik, S Khurshid, {CRNs} {E}xposed: Systematic exploration of
  chemical reaction networks in {\em International Conference on DNA Computing
  and Molecular Programming}.
\newblock (2020).

\bibitem{fisher1936use}
RA FISHER, The use of multiple measurements in taxonomic problems.
\newblock {\em\protect\JournalTitle{Annals of Eugenics}} \textbf{7}, 179--188
  (1936).

\bibitem{anderson1936species}
E Anderson, The species problem in iris.
\newblock {\em\protect\JournalTitle{Annals of the Missouri Botanical Garden}}
  \textbf{23}, 457--509 (1936).

\bibitem{lecun1998gradient}
Y Lecun, L Bottou, Y Bengio, P Haffner, Gradient-based learning applied to
  document recognition.
\newblock {\em\protect\JournalTitle{Proceedings of the IEEE}} \textbf{86},
  2278--2324 (1998).

\bibitem{GSE73072}
{H}ost gene expression signatures of {H1N1}, {H3N2}, {HRV}, {RSV} virus
  infection in adults
  \url{https://www.ncbi.nlm.nih.gov/geo/query/acc.cgi?acc=GSE73072}.

\bibitem{CRNSimulatorPackage}
{M}athematica package for working with networks of coupled chemical reactions.
  \url{http://users.ece.utexas.edu/~soloveichik/crnsimulator.html}.

\bibitem{GEO2R}
{I}dentifying {D}ifferentially {E}xpressed {G}enes
  \url{https://www.ncbi.nlm.nih.gov/geo/geo2r/}.

\bibitem{ioffe2015batch}
S Ioffe, C Szegedy, Batch normalization: Accelerating deep network training by
  reducing internal covariate shift in {\em Proceedings of the 32nd
  International Conference on Machine Learning}, Proceedings of Machine
  Learning Research, eds.{} F Bach, D Blei.
\newblock (PMLR, Lille, France), Vol.{}~37, pp. 448--456 (2015).

\bibitem{srivastava2014dropout}
N Srivastava, G Hinton, A Krizhevsky, I Sutskever, R Salakhutdinov, Dropout: a
  simple way to prevent neural networks from overfitting.
\newblock {\em\protect\JournalTitle{The journal of machine learning research}}
  \textbf{15}, 1929--1958 (2014).

\bibitem{clarke1988stoichiometric}
BL Clarke, Stoichiometric network analysis.
\newblock {\em\protect\JournalTitle{Cell biophysics}} \textbf{12}, 237--253
  (1988).

\bibitem{feinberg2019foundations}
M Feinberg, {\em Foundations of chemical reaction network theory}.
\newblock (Springer), (2019).

\bibitem{senum2011rate}
P Senum, M Riedel, Rate-independent constructs for chemical computation in {\em
  Biocomputing 2011}.
\newblock (World Scientific), pp. 326--337 (2011).

\bibitem{hellingwerf1995signal}
KJ Hellingwerf, PW Postma, J Tommassen, HV Westerhoff, Signal transduction in
  bacteria: phospho-neural network(s) in {E}scherichia coli?
\newblock {\em\protect\JournalTitle{FEMS microbiology reviews}} \textbf{16},
  309--321 (1995).

\bibitem{bray1995protein}
D Bray, Protein molecules as computational elements in living cells.
\newblock {\em\protect\JournalTitle{Nature}} \textbf{376}, 307--312 (1995).

\bibitem{buchler2003schemes}
NE Buchler, U Gerland, T Hwa, On schemes of combinatorial transcription logic.
\newblock {\em\protect\JournalTitle{Proceedings of the National Academy of
  Sciences}} \textbf{100}, 5136--5141 (2003).

\bibitem{hjelmfelt1991chemical}
A Hjelmfelt, ED Weinberger, J Ross, Chemical implementation of neural networks
  and {T}uring machines.
\newblock {\em\protect\JournalTitle{Proceedings of the National Academy of
  Sciences}} \textbf{88}, 10983--10987 (1991).

\bibitem{moormandynamical2019}
A Moorman, CC Samaniego, C Maley, R Weiss, A dynamical biomolecular neural
  network in {\em 58th IEEE Conference on Decision and Control}.
\newblock (IEEE), (2019).

\bibitem{anderson2020reaction}
DF Anderson, A Deshpande, B Joshi, On reaction network implementations of
  neural networks.
\newblock (arXiv preprint arXiv:2010.13290), (2020).

\bibitem{chiang2015reconfigurable}
HJK Chiang, JHR Jiang, F Fages, Reconfigurable neuromorphic computation in
  biochemical systems in {\em 2015 37th Annual International Conference of the
  IEEE Engineering in Medicine and Biology Society (EMBC)}.
\newblock (IEEE), pp. 937--940 (2015).

\bibitem{blount2017feedforward}
D Blount, P Banda, C Teuscher, D Stefanovic, Feedforward chemical neural
  network: An in silico chemical system that learns xor.
\newblock {\em\protect\JournalTitle{Artificial life}} \textbf{23}, 295--317
  (2017).

\bibitem{lopez2018molecular}
R Lopez, R Wang, G Seelig, A molecular multi-gene classifier for disease
  diagnostics.
\newblock {\em\protect\JournalTitle{Nature chemistry}} \textbf{10}, 746--754
  (2018).

\bibitem{qian2011neural}
L Qian, E Winfree, J Bruck, Neural network computation with {DNA} strand
  displacement cascades.
\newblock {\em\protect\JournalTitle{Nature}} \textbf{475}, 368--372 (2011).

\bibitem{cherry2018scaling}
KM Cherry, L Qian, Scaling up molecular pattern recognition with {DNA}-based
  winner-take-all neural networks.
\newblock {\em\protect\JournalTitle{Nature}} \textbf{559}, 370--376 (2018).

\bibitem{seelig2009time}
G Seelig, D Soloveichik, Time-complexity of multilayered {DNA} strand
  displacement circuits in {\em International Workshop on {DNA}-Based
  Computers}.
\newblock (Springer), pp. 144--153 (2009).

\end{thebibliography}

\clearpage
\section*{Supplementary Information Appendix}
\label{sec:appendix}
\Subsection{\RReLU example}
\label{sec:appendix:rrelu}

\begin{figure}[!t]
  \centering
  \begin{subfigure}[!t]{0.5\textwidth}
    \centering
    \includegraphics[scale=0.8]{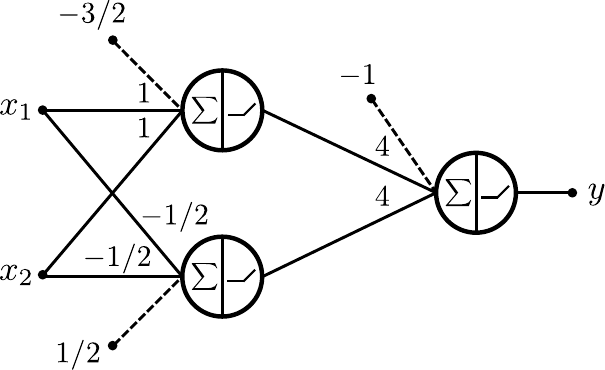}
    \caption{\RReLU neural network.}
    \label{fig:rrelu-example}
  \end{subfigure}
  \begin{subfigure}[!t]{0.5\textwidth}
    \centering
    \small
    \begin{align}
      X_1^+ &\rxn F_{1,1,1}^+ + F_{1,1,2}^+ \\
      X_1^- &\rxn F_{1,1,1}^- + F_{1,1,2}^- \\
      X_2^+ &\rxn F_{1,2,1}^+ + F_{1,2,2}^+ \\
      X_2^- &\rxn F_{1,2,1}^- + F_{1,2,2}^- \\
      F_{1,1,1}^+ &\rxn I_{1,1}^+ \\
      F_{1,1,1}^- &\rxn I_{1,1}^- \\
      F_{1,1,2}^+ + F_{1,1,2}^+ &\rxn I_{1,2}^- \\
      F_{1,1,2}^- + F_{1,1,2}^- &\rxn I_{1,2}^+ \\
      F_{1,2,1}^+ &\rxn I_{1,1}^+ \\
      F_{1,2,1}^- &\rxn I_{1,1}^- \\
      F_{1,2,2}^+ + F_{1,1,2}^+ &\rxn I_{1,2}^- \\
      F_{1,2,2}^- + F_{1,1,2}^- &\rxn I_{1,2}^+ \\
      I_{1,1}^+ &\rxn M_{1,1} + H_{1,1}^+ \\
      M_{1,1} + I_{1,1}^- &\rxn H_{1,1}^- \\
      I_{1,2}^+ &\rxn M_{1,2} + H_{1,2}^+ \\
      M_{1,2} + I_{1,2}^- &\rxn H_{1,2}^- \\
      H_{1,1}^+ &\rxn F_{2,1,1}^+ \\
      H_{1,1}^- &\rxn F_{2,1,1}^- \\
      H_{1,2}^+ &\rxn F_{2,2,1}^+ \\
      H_{1,2}^- &\rxn F_{2,2,1}^- \\
      F_{2,1,1}^+ &\rxn 4I_{2,1}^+ \\
      F_{2,1,1}^- &\rxn 4I_{2,1}^- \\
      F_{2,2,1}^+ &\rxn 4I_{2,1}^+ \\
      F_{2,2,1}^- &\rxn 4I_{2,1}^- \\
      I_{2,1}^+ &\rxn M_{2,1} + Y^+ \\
      M_{2,1} + I_{2,1}^- &\rxn Y^-
    \end{align}
    \caption{CRN implementation of the \BReLU neural network.}
  \end{subfigure}
  \begin{subfigure}[!t]{0.5\textwidth}
    \centering
    \small
    \begin{align}
      i_{1,1}^-(0) &= 3/2 \\
      i_{1,2}^+(0) &= 1/2 \\
      i_{2,1}^-(0) &= 1
    \end{align}
    \caption{Initial concentrations implementing bias terms of the \RReLU neural network.}
  \end{subfigure}
  \caption{Example \RReLU network and its CRN counterpart.}
  \label{fig:rrelu-crn}
\end{figure}
  
\begin{figure}[!t]
  \centering
  \begin{subfigure}[!t]{0.5\textwidth}
    \centering
    \small
    \begin{align}
      X_1^+ &\rxn M_{1,1} + 4M_{2,1} + 4Y^+ + F_{1,1,2}^+ \\
      X_1^- &\rxn I_{1,1}^- + F_{1,1,2}^- \\
      X_2^+ &\rxn M_{1,1} + 4M_{2,1} + 4Y^+ + F_{1,2,2}^+ \\
      X_2^- &\rxn I_{1,1}^- + F_{1,2,2}^- \\
      F_{1,1,2}^+ + F_{1,1,2}^+ &\rxn I_{1,2}^- \\
      F_{1,1,2}^- + F_{1,1,2}^- &\rxn M_{1,2} + 4M_{2,1} + 4Y^+ \\
      F_{1,2,2}^+ + F_{1,1,2}^+ &\rxn I_{1,2}^- \\
      F_{1,2,2}^- + F_{1,1,2}^- &\rxn M_{1,2} + 4M_{2,1} + 4Y^+ \\
      M_{1,1} + I_{1,1}^- &\rxn 4I_{2,1}^- \\
      M_{1,2} + I_{1,2}^- &\rxn 4I_{2,1}^- \\
      M_{2,1} + I_{2,1}^- &\rxn Y^-
    \end{align}
    \caption{Optimized CRN implementing the \RReLU neural network from Figure~\ref{fig:rrelu-example}.}
  \end{subfigure}
  \begin{subfigure}[!t]{0.5\textwidth}
    \centering
    \small
    \begin{align}
      i_{1,1}^-(0) &= 3/2 \\
      i_{2,1}^-(0) &= 1 \\
      m_{1,2}(0) &= 1/2 \\
      m_{2,1}(0) &= 2 \\
      y^+(0) &= 2
    \end{align}
    \caption{Initial concentrations after the optimization.}
  \end{subfigure}
  \caption{Optimized CRN implementing \RReLU neural network.}
  \label{fig:rrelu-crn-optimized}
\end{figure}

Figure~\ref{fig:rrelu-crn} shows a full implementation of an \RReLU network, and Figure~\ref{fig:rrelu-crn-optimized} shows the CRN after optimization procedure is performed.

\Subsection{\BReLU example}
\label{sec:appendix:brelu}

\begin{figure}[!t]
  \centering
  \begin{subfigure}[!t]{0.5\textwidth}
    \centering
    \includegraphics[scale=0.8]{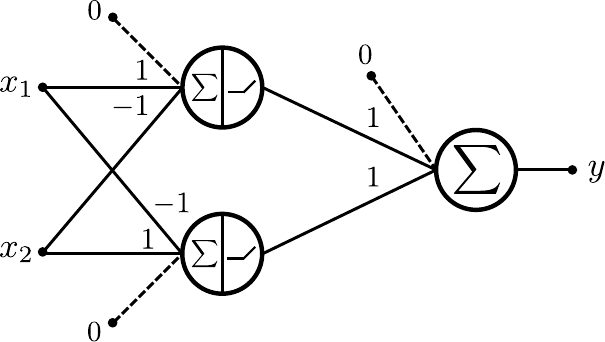}
    \caption{\BReLU neural network.}
  \end{subfigure}
  \begin{subfigure}[!t]{0.5\textwidth}
    \centering
    \small
    \begin{align}
      X_1^+ &\rxn I_{1,1}^+ + I_{1,2}^- \\
      X_1^- &\rxn I_{1,1}^- + I_{1,2}^+ \\
      X_2^+ &\rxn I_{1,1}^- + I_{1,2}^+ \\
      X_2^- &\rxn I_{1,1}^+ + I_{1,2}^- \\
      I_{1,1}^+ &\rxn M_{1,1} + H_{1,1}^+ \\
      M_{1,1} + I_{1,1}^- &\rxn H_{1,1}^- \\
      I_{1,2}^+ &\rxn M_{1,2} + H_{1,2}^+ \\
      M_{1,2} + I_{1,2}^- &\rxn H_{1,2}^- \\
      H_{1,1}^+ &\rxn Y^+ \\
      H_{1,1}^- &\rxn Y^- \\
      H_{1,2}^+ &\rxn Y^+ \\
      H_{1,2}^- &\rxn Y^-
    \end{align}
    \caption{CRN implementation of the \BReLU neural network.
      This CRN is partially optimized -- only fan-out module is optimized.}
  \end{subfigure}
  \begin{subfigure}[!t]{0.5\textwidth}
    \centering
    \small
    \begin{align}
      X_1^+ &\rxn M_{1,1} + Y^+ + I_{1,2}^- \\
      X_1^- &\rxn I_{1,1}^- + M_{1,2} + Y^+ \\
      X_2^+ &\rxn I_{1,1}^- + M_{1,2} + Y^+ \\
      X_2^- &\rxn M_{1,1} + Y^+ + I_{1,2}^- \\
      M_{1,1} + I_{1,1}^- &\rxn Y^- \\
      M_{1,2} + I_{1,2}^- &\rxn Y^- \\
    \end{align}
    \caption{Optimized CRN implementation of the \BReLU neural network.}
  \end{subfigure}
  \caption{Example \BReLU network and its CRN counterpart.}
  \label{fig:brelu-crn}
\end{figure}

Figure~\ref{fig:brelu-crn} shows a full implementation of an \BReLU network.
\subsection{Proof of Theorem~\ref{thm:main}}\label{sec:NCCRNs_proofs}
Here we prove that if a non-competitive CRN can reach a static state in the nondeterministic kinetic model, then the CRN converges to that state under any fair rate law.
This idea simplifies the proof of a non-competitive CRNs' convergence to the simple task of identifying one path to a static state.
Notably, the results here simplify proofs of convergence for constructions given in~\cite{chen2014rate, chalk2019composable}.

To prove Theorem~\ref{thm:main}, several lemmas are provided along the way.
This first lemma does most of the work, showing that line-segment reachability of $\vec{a} \rightarrow \vec{b}$ with $\vec{b}$ a static state places severe restriction on the possible paths leaving $\vec{a}$.
Note that a path from $\vec{a}$ to $\vec{b}$ refers to the sequence of straight-line reachability relations $\rightarrow^1_\vec{u_i}$ which show that $\vec{a} \rightarrow \vec{b}$.

\begin{lemma}\label{lem:nc_implies_two_paths_flux_difference_implies_one_is_not_static}
Assume a CRN is non-competitive.
Consider two paths $p_1$ and $p_2$ leaving state $\vec{a}$.
If $p_1$ has finite length and ends in state $\vec{b}$ and $p_2$ applies some reaction $R$ more than $p_1$, then $\vec{b}$ is not static.
\end{lemma}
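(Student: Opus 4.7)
The plan is to argue by contradiction: suppose \(\vec{b}\) is static and exhibit a reaction applicable at \(\vec{b}\). The candidate will be the very first reaction along \(p_2\) whose cumulative flux ``overshoots'' its total on \(p_1\).

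Write \(\phi_{2,i}\) for the cumulative flux of \(p_2\) through step \(i\), and let \(\phi_1,\phi_2\) be the total cumulative flux vectors of \(p_1,p_2\), so that \(\vec{b}=\vec{a}+\vec{M}\phi_1\) and \(\phi_2(R)>\phi_1(R)\). Let \(i^*\) be the smallest index with \(\phi_{2,i^*}\not\leq\phi_1\) componentwise (this exists because of the \(R\)-coordinate), and let \(\alpha^*\in[0,1)\) be the largest scalar with \(\phi^*:=\phi_{2,i^*-1}+\alpha^*\vec{v}_{i^*}\leq\phi_1\). By maximality, \(\phi^*(Q^*)=\phi_1(Q^*)\) for some reaction \(Q^*\) with \(\vec{v}_{i^*}(Q^*)>0\). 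Set \(\vec{c}^*=\vec{d}_{i^*-1}+\alpha^*\vec{M}\vec{v}_{i^*}\); since \(\alpha^*\vec{v}_{i^*}\) has the same support as \(\vec{v}_{i^*}\) it is applicable at \(\vec{d}_{i^*-1}\), so \(\vec{a}\rightarrow\vec{c}^*\). Writing \(\vec{c}^*=(1-\alpha^*)\vec{d}_{i^*-1}+\alpha^*\vec{d}_{i^*}\) together with \(\vec{d}_{i^*-1}(S)>0\) for every reactant \(S\) of \(Q^*\) (applicability of \(\vec{v}_{i^*}\)) and \(\alpha^*<1\) yields \(\vec{c}^*(S)>0\).

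The key estimate is \(\vec{b}(S)\geq\vec{c}^*(S)\) for every reactant \(S\) of \(Q^*\). Since \(\vec{b}-\vec{c}^*=\vec{M}(\phi_1-\phi^*)\), I examine each summand \(M_{S,Q}(\phi_1(Q)-\phi^*(Q))\). By construction \(\phi_1-\phi^*\geq 0\) componentwise, with the crucial \emph{equality} \((\phi_1-\phi^*)(Q^*)=0\). Non-competitiveness produces a clean case split: if \(S\) is decreased in \(Q^*\), then \(S\) is a reactant in no other reaction, so \(M_{S,Q}\geq 0\) for all \(Q\neq Q^*\); otherwise \(S\) cannot be decreased in any reaction (if it were decreased in some \(Q'\neq Q^*\) it would be a reactant only in \(Q'\), contradicting \(S\) being a reactant of \(Q^*\)), so \(M_{S,Q}\geq 0\) for all \(Q\). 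In both cases every summand is non-negative except possibly the one at \(Q=Q^*\), and that one vanishes because \((\phi_1-\phi^*)(Q^*)=0\). Hence \(\vec{b}(S)\geq\vec{c}^*(S)>0\) for every reactant \(S\) of \(Q^*\), so \(Q^*\) is applicable at \(\vec{b}\), contradicting staticness.

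The main obstacle is exactly the design of \(\vec{c}^*\) so that the distinguished coordinate is hit with \emph{equality} \(\phi^*(Q^*)=\phi_1(Q^*)\), not merely with inequality. If one simply used the pre-overshoot state \(\vec{d}_{i^*-1}\), the \(Q^*\)-term could contribute a strictly negative \(M_{S,Q^*}(\phi_1(Q^*)-\phi_{2,i^*-1}(Q^*))\) potentially dominating the non-negative terms, and \(\vec{b}(S)\geq\vec{c}^*(S)\) would fail. The splitting trick with \(\alpha^*\) is precisely what converts non-competitiveness into the desired inequality.
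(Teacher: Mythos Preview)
Your proof is correct and follows essentially the same approach as the paper's: locate the first point along $p_2$ at which some applied reaction's cumulative flux matches its total on $p_1$, and use non-competition to show that reaction is applicable at $\vec{b}$. Your ``largest $\alpha^*$ with $\phi^*\leq\phi_1$'' is exactly the paper's ``minimal $\lambda$ at which some applied $R'$ hits equality,'' and your explicit two-case split on whether $S$ is decreased in $Q^*$ makes precise what the paper compresses into the single remark that the non-$R'$ stoichiometric entries for $S$ are nonnegative.
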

\begin{proof}
First, some notation: if $\vec{x} \rightarrow^1_{\vec{w}_1}\dots\rightarrow^1_{\vec{w}_n} \vec{y}$, then we use this shorthand notation for the sum of the flux of a reaction $R$ along the path: $F_{\vec{x}\rightarrow \vec{y}}(R) = \sum_{i=1}^n \vec{w}_i(R)$.

Write path $p_2$ as $\vec{a} \rightarrow_{\vec{u_1}}^1 \vec{a_1} \rightarrow_{\vec{u_2}}^1 \dots$.
Choose the minimal $i$ such that $\vec{a_i}$ satisfies that there exists a reaction $R$ such that $F_{\vec{a}\rightarrow \vec{a}_i}(R) > F_{\vec{a}\rightarrow \vec{b}}(R)$.
Note that such a state $\vec{a_i}$ exists by the lemma's assumption.
Note that since $\vec{a}_{i-1} \rightarrow^1_{\vec{u}_i} \vec{a}_i$, then $\vec{a}_{i-1} \rightarrow^1_{\lambda\vec{u}_i} \vec{a'}$ for any $\lambda \in [0, 1]$.
In other words, every state along the line segment from $\vec{a_{i-1}}$ to $\vec{a_i}$ is reachable from $\vec{a}$.
Find the minimal $\lambda$ such that there exists a reaction $R'$ such that $R'$ is being applied on this line segment (formally, $\vec{u}_i(R') > 0$) and $F_{\vec{a}\rightarrow \vec{a'}}(R') = F_{\vec{a}\rightarrow \vec{b}}(R')$.
These minimal choices of $i$ and $\lambda$ ensure that for all $R'' \neq R'$, $F_{\vec{a}\rightarrow \vec{a'}}(R'') \leq F_{\vec{a}\rightarrow \vec{b}}(R'')$.

Let $S$ be an arbitrary reactant of $R'$, and let $r$ be the entry in the stoichiometry matrix corresponding to species $S$ and reaction $R'$.
Let $P_1,\dots,P_n$ be the reactions which produce species $S$, and let $p_i$ be the entries of the stoichiometry matrix corresponding to species $S$ and reactions $P_i$.
Note that by non-competition, $p1,\dots,p_n$ are nonnegative.
We can write the concentrations of
$S$ in $\vec{a'}$ and $\vec{b}$ as the initial concentration plus the amount changed by reaction application as follows:
\begin{align*}
    \vec{a'}(S) &= \vec{a}(S) + p_1F_{\vec{a} \rightarrow \vec{a'}}(P_1) + \dots +
    p_nF_{\vec{a} \rightarrow \vec{a'}}(P_n) +
    rF_{\vec{a} \rightarrow \vec{a'}}(R'),
    \\
    \vec{b}(S) &= \vec{a}(S) + p_1F_{\vec{a} \rightarrow \vec{b}}(P_1) + \dots
    + p_nF_{\vec{a} \rightarrow \vec{b}}(P_n) +
    rF_{\vec{a} \rightarrow \vec{b}}(R').
\end{align*}
Recall that $\vec{a'}$ was chosen such that $F_{\vec{a} \rightarrow \vec{a'}}(R') = F_{\vec{a} \rightarrow \vec{b}}(R')$ and for all reactions $R'' \neq R'$ (notably, the $P_i$ reactions), $F_{\vec{a} \rightarrow \vec{a'}}(R'') \leq F_{\vec{a} \rightarrow \vec{b}}(R'')$.
So we have $\vec{a'}(S) \leq \vec{b}(S)$.
Further, recall that $R'$ is applicable in $\vec{a'}$, so $\vec{a'}(S) > 0$, and so $\vec{b}(S) > 0$.
Since $S$ was arbitrary, all reactants needed to apply reaction $R'$ are available in $\vec{b}$, so $\vec{b}$ is not static.
\end{proof}

While Theorem~\ref{thm:main} is stated in Section~\ref{sec:CRNprogramming} in terms of mass-action kinetics, we reiterate that the theorem holds for any fair rate law (Definition~\ref{def:fair_rate_law}).
Previous work shows that mass-action is indeed a fair rate law:

\begin{lemma}\label{lem:mass-action_reachable_implies_segment_reachable}
Proven in~\cite{chen2014rate}: For any CRN, if $\vec{a}$ can reach $\vec{b}$ under mass action, then $\vec{a} \rightarrow \vec{b}$.
(This holds even if $\vec{b}$ takes infinite time to reach under mass action, i.e., it is the limit state.)
\end{lemma}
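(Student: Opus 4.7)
The plan is to convert a mass-action trajectory reaching $\vec{b}$ from $\vec{a}$ into a finite line-segment path in the nondeterministic kinetic model. The engine of the proof is the analyticity of mass-action solutions combined with monotonicity of the support of $\vec{x}(t)$.

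First I would put the trajectory into integral form. If $\vec{r}(s)$ denotes the vector of mass-action reaction rates along the solution $\vec{x}(s)$, then $\vec{x}(t) - \vec{a} = \vec{M}\int_0^t \vec{r}(s)\,ds$. Let $T \in (0,\infty]$ be the time at which $\vec{x}(t) \to \vec{b}$, and define the total flux $\vec{F}^* = \int_0^T \vec{r}(s)\,ds$. Each coordinate of $\vec{F}^*$ is monotone nondecreasing in $t$, and I would argue it is finite via a bounding argument: a divergent coordinate, combined with the stoichiometry matrix and nonnegativity of concentrations, would force some species' concentration to diverge, contradicting the finiteness of $\vec{b}$.

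Next I would decompose $[0,T]$ into finitely many phases on each of which the support of $\vec{x}(t)$ (and hence the set of active reactions) is constant. The key observation is that once $\vec{x}(t)(S) > 0$ for some $t$, then $\vec{x}(s)(S) > 0$ for all $s \in (t,T)$: whenever $\vec{x}(t)(S) = 0$, every consuming reaction of $S$ has rate zero because $S$ is one of its reactants, so $\dot{\vec{x}}(t)(S) \geq 0$, and analyticity of the mass-action solution rules out an isolated re-zeroing. Since $|\Lambda|$ is finite, the support grows at most finitely many times, giving a finite partition $0 = t_0 < t_1 < \dots < t_n = T$. On each subinterval I would define $\vec{u}_i = \int_{t_i}^{t_{i+1}} \vec{r}(s)\,ds$; this flux is supported on reactions whose reactants all lie in the (constant) support on $[t_i,t_{i+1}]$, so in particular have strictly positive concentration at the entry state $\vec{x}(t_i)$, making $\vec{u}_i$ applicable at $\vec{x}(t_i)$ in the sense of Section~\ref{sec:CRNs}. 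By construction $\vec{x}(t_i) + \vec{M}\vec{u}_i = \vec{x}(t_{i+1})$, so $\vec{x}(t_i) \rightarrow^1_{\vec{u}_i} \vec{x}(t_{i+1})$, and chaining yields $\vec{a} \rightarrow \vec{b}$.

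The main obstacle I anticipate is the endpoint behavior when $T = \infty$, since within the final phase some species may decay to zero only asymptotically (as in $A \rxn B$, where $\vec{x}(t)(A) \to 0$ without any support change). This does not break the construction: the last partial flux $\vec{u}_{n-1} = \int_{t_{n-1}}^\infty \vec{r}(s)\,ds$ remains finite by the bounding step, and its applicability at $\vec{x}(t_{n-1})$ requires only positivity of the reactants of the active reactions at the entry time, which holds by the definition of the phase. Passing to the limit in $\vec{x}(t_i) + \vec{M}\vec{u}_i = \vec{x}(t_{i+1})$ for the last segment then gives the desired equality $\vec{x}(t_{n-1}) + \vec{M}\vec{u}_{n-1} = \vec{b}$, completing the finite line-segment path.
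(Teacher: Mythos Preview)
The paper does not give its own proof of this lemma; it merely cites \cite{chen2014rate}. So there is no paper proof to compare against, only your attempt to reconstruct one.

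Your overall architecture (integral form, support monotonicity, finitely many support phases, one straight-line segment per phase) is the right idea and works cleanly when $T<\infty$. The genuine gap is in your ``bounding step'' for the total flux $\vec{F}^* = \int_0^T \vec r(s)\,ds$ when $T=\infty$. The claim that a divergent coordinate of $\vec{F}^*$ forces some species concentration to diverge is false. Take the two-reaction CRN $A \rxn B$, $B \rxn A$ with $\vec a(A),\vec a(B)>0$: under mass action the state converges to a finite equilibrium $\vec b$, yet both reaction rates stay bounded away from zero forever, so both coordinates of $\vec{F}^*$ diverge while all concentrations remain bounded. More generally, any dynamic equilibrium or cycle in the stoichiometry lets $\vec M\vec{F}^*$ stay finite while individual coordinates of $\vec{F}^*$ blow up. Consequently your proposed last segment $\vec u_{n-1}=\int_{t_{n-1}}^\infty \vec r(s)\,ds$ need not be a well-defined (finite) flux vector, and the construction breaks exactly in the limit case the lemma explicitly covers.

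A repair is possible but requires an extra idea you have not supplied. One route: use your argument to get $\vec a \rightarrow \vec x(t)$ for every finite $t$ (where all integrals are finite), and then show that the line-segment reachable set is closed under this kind of limit---equivalently, that the cone generated by the columns of $\vec M$ corresponding to reactions applicable at $\vec x(t_{n-1})$ is closed, so the limit displacement $\vec b - \vec x(t_{n-1})$ lies in it and admits some \emph{finite} nonnegative flux $\vec u'$ with $\vec M\vec u' = \vec b - \vec x(t_{n-1})$. That finite $\vec u'$, not the divergent integral, is what you should use on the last segment. (A smaller point: your analyticity justification for support monotonicity is not quite self-contained; the cleaner argument is the Gronwall-type bound $\dot x_S \ge -K x_S$ showing a positive species cannot hit zero in finite time. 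But this is secondary to the flux issue.)
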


Towards proving the theorem, first, we must eliminate the possibility that although $\vec{a} \rightarrow \vec{b}$ in the nondeterministic kinetic model and $\vec{b}$ is a static state, that somehow the CRN may converge under the rate law to a dynamic equilibrium or to some oscillatory cycle of states, or that it does not converge at all.
These kinetic behaviors are associated with the following kinds of infinite paths in the nondeterministic kinetic model as described in Lemma~\ref{lem:nc_implies_no_equilibrium_implies_unbounded potential} below.

\begin{definition}
Given a CRN and a state $\vec{a}$, $\vec{a}$ has \emph{unbounded potential} if there exists a path $\vec{a} \rightarrow^1_{\vec{u_1}} \vec{a_1} \rightarrow^1_{\vec{u_2}} \vec{a_2} \rightarrow^1_{\vec{u_3}} \dots$ such that there exists a reaction $R$ such that $\sum_{i = 1}^{\infty}\vec{u_i}(R) = \infty$.
\end{definition}

\begin{lemma}\label{lem:nc_implies_no_equilibrium_implies_unbounded potential}
Assume a CRN is non-competitive.
If $\vec{a}$ does not converge to a static equilibrium under fair rate law kinetics, then $\vec{a}$ has unbounded potential.
\end{lemma}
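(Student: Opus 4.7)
The plan is to convert the hypothesis that $\vec{a}$ does not converge to a static state under the fair rate law into the existence of an infinite $\rightarrow^1$ sequence whose cumulative flux for some reaction is unbounded. I would proceed in two steps: first I would show that the rate-law trajectory $\vec{a}(t)$ itself must have unbounded integrated flux $\int_0^\infty r_R(\vec{a}(s))\,ds$ for some reaction $R$, and then I would lift this fact into the nondeterministic kinetic model using fairness item (2) together with a stoichiometric cycle made accessible by non-competition.

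For the first step, suppose for contradiction that every reaction $R$ satisfies $\int_0^\infty r_R(\vec{a}(s))\,ds < \infty$. Since $\vec{a}(t) = \vec{a}(0) + \vec{M}\int_0^t \vec{r}(\vec{a}(s))\,ds$, where $\vec{r}$ is the vector of reaction rates, the trajectory must then converge to a finite limit $\vec{c}$. By the hypothesis of the lemma this $\vec{c}$ cannot be static, so some reaction $R^*$ has all its reactants positive at $\vec{c}$. Fairness item (1), together with continuity of the rate law, bounds $r_{R^*}$ below by a positive constant on some neighborhood of $\vec{c}$ that the trajectory eventually enters and never leaves, forcing $\int_0^\infty r_{R^*}(\vec{a}(s))\,ds = \infty$ and contradicting the assumption.

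For the second step, in the main case the trajectory converges to a non-static dynamic equilibrium $\vec{c}$. Since $d\vec{a}/dt \rightarrow 0$ along a convergent trajectory, we get $\vec{M}\vec{r}(\vec{c}) = 0$, i.e., $\vec{r}(\vec{c})$ is a nonnegative element of $\ker\vec{M}$. By fairness item (2), $\vec{a}\rightarrow\vec{a}(T)$ holds for every finite $T$; pick $T$ large enough that every reactant of every reaction $R$ with $\vec{r}(\vec{c})(R) > 0$ is positive at $\vec{a}(T)$, which is possible by continuity. Then the flux vector $\vec{r}(\vec{c})$ is applicable at $\vec{a}(T)$, and applying it leaves the state unchanged since $\vec{M}\vec{r}(\vec{c}) = 0$. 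Concatenating the finite prefix $\vec{a}\rightarrow\vec{a}(T)$ with infinitely many self-loops $\vec{a}(T) \rightarrow^{1}_{\vec{r}(\vec{c})} \vec{a}(T)$ produces an infinite $\rightarrow^1$ path with unbounded cumulative flux for every $R$ with $\vec{r}(\vec{c})(R) > 0$, witnessing unbounded potential.

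The main obstacle is the remaining situation where the rate-law trajectory neither converges to a static state nor to any limit point. If some concentration escapes to infinity, non-competition applies directly: in any nondeterministic path realizing $\vec{a}\rightarrow\vec{a}(T)$, unbounded production of the escaping species forces some producing reaction to accumulate unbounded flux along the concatenated path, since each consumed species has at most one consumer and the consumption is therefore controlled. For bounded but non-convergent trajectories I would trace a backward chain through the CRN — each reaction with unbounded rate-law flux forces its uniquely determined consumed reactant to require an upstream producer of also unbounded flux — and use finiteness of the reaction set to close this chain into a stoichiometric cycle at some $\omega$-limit point $\vec{c}$ of the trajectory, after which the same self-loop construction as in the convergent case exhibits an infinite $\rightarrow^1$ path of unbounded cumulative flux.
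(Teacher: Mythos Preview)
Your treatment of the dynamic-equilibrium case is essentially the paper's: both of you loop a nonzero null-flux vector at (or arbitrarily near) the limit point $\vec{c}$. The real divergence is in the non-convergent case, and there your argument has a genuine gap.

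The problem is the bounded, non-convergent subcase. Your backward-chain idea correctly shows that non-competition plus boundedness propagates ``unbounded integrated rate-law flux'' from a reaction to one of its producers, and finiteness of the reaction set closes this into a cycle in the reaction-dependency graph. But a cycle in that graph is \emph{not} the same thing as a nonnegative element of $\ker\vec{M}$, which is what you need for the self-loop construction $\vec{c}\rightarrow^1_{\vec{u}}\vec{c}$. (For instance, $A\to 2B$, $B\to A$ is non-competitive and forms such a cycle, yet $\ker\vec{M}\cap\mathbb{R}^2_{\geq 0}=\{0\}$.) You also rely on the $\omega$-limit point $\vec{c}$ being $\rightarrow$-reachable from $\vec{a}$ and on $\vec{u}$ being applicable there; neither is justified, since fairness item~(2) gives reachability only to states actually visited in finite time. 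Separately, both Step~1 and the claim $d\vec{a}/dt\to 0$ lean on continuity of the rate function, which is not part of the fair-rate-law definition.

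The paper sidesteps all of this with a much more elementary argument that never analyzes the rate-law flux at all: it simply samples the trajectory at an increasing sequence of times $t_0<t_1<\cdots$ chosen (via the negation of convergence) so that consecutive sampled states stay at least a fixed $\varepsilon$ apart. Fairness item~(2) gives $\vec{a}\rightarrow s(t_0)\rightarrow s(t_1)\rightarrow\cdots$, and since each segment must effect a net change of norm $\geq\varepsilon$ through $\vec{M}$, the cumulative flux along the concatenated path is unbounded, hence some single reaction has unbounded flux by pigeonhole. This handles the bounded and unbounded non-convergent cases uniformly, needs no continuity, and---worth noting---does not actually invoke non-competition at all, despite the hypothesis in the lemma statement.
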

\begin{proof}
There are two cases; either $\vec{a}$ converges to a dynamic equilibrium, or $\vec{a}$ does not converge.
If $\vec{a}$ converges to a dynamic equilibrium $\vec{c}$, then by the fair rate law assumption, $\vec{a} \rightarrow \vec{c}$.
Since $\vec{c}$ is a dynamic equilibrium, there exists a nonzero flux vector $\vec{u}$ such that $\vec{c} \rightarrow^1_{\vec{u}} \vec{c}$.
Consider the path $\vec{a} \rightarrow \vec{c} \rightarrow^1_{\vec{u}} \vec{c} \rightarrow^1_{\vec{u}} \dots$.
This path shows that $\vec{a}$ has unbounded potential.

Otherwise, $\vec{a}$ does not converge as $t \rightarrow \infty$.
In this case, intuitively, we use the assumption of non-convergence to construct a path with unbounded potential.
Formally, letting $\vec{s}_t$ be the state of the CRN starting at $\vec{a}$ under mass-action kinetics after time $t$, we will show how to find an infinite sequence of time points $t_0, t_1, \dots$ such that $\vec{a} \rightarrow \vec{s}_{t_0} \rightarrow \vec{s}_{t_1} \rightarrow \dots$ and this path has infinite flux on some reaction $R$, thus showing that $\vec{a}$ has unbounded potential.

Let $s(t)$ be the state reached at time $t$ starting from $\vec{a}$ under mass-action kinetics.
By negating the definition of convergence, non-convergence means that for any state $\vec{x} \in \mathbb{R}^n_{\geq 0}$, we can find an $\varepsilon \in \mathbb{R}$ such that for any time $t$, we can find a $t_0 > t$ such that there is a species $S$ such that $|s(t_0)(S) - \vec{x}(S)| \geq \varepsilon$, i.e., $s(t_0)$ is outside of the open ball of $\varepsilon$ radius centered at $\vec{x}$.
Let the initial state $\vec{a}$ be the $\vec{x}$ in the non-convergence definition, then let $\varepsilon_0 = \varepsilon$, take an arbitrary time $t$, and any $t_0 > t$.
Some species $S$ has $|s(t_0)(S) - \vec{x}(S)| \geq \varepsilon_0$, and by the fair rate law assumption, $\vec{a} \rightarrow s(t_0)$.
Then, similarly, letting $s(t_0)$ be the $\vec{x}$ in the non-convergence definition, let $\varepsilon_1 = \varepsilon$, an arbitrary $t > t_0$, and take any $t_1 > t$.
Now, some species $S$ has $|s(t_1)(S) - s(t_0)(S)| \geq \varepsilon_1$, and by the fair rate law assumption, $s(t_0) \rightarrow s(t_1)$.
Repeating this process yields an infinite path $\vec{a} \rightarrow s(t_0) \rightarrow s(t_1) \dots$ and an infinite sequence $\varepsilon_0, \varepsilon_1, \dots$ with the property that, given $i \in \mathbb{N}$, there is a  species $S$ such that $|s(t_i)(S) - s(t_{i-1})(S)| \geq \varepsilon_i$.
Note that we can choose each $t_i$ such that $\varepsilon_i \geq \varepsilon_{i-1}$.\footnote{To show this, towards contradiction assume the following proposition $\mathcal{P}$: for all choices of the infinite sequence of $t_i$, there is an infinite subsequence $t'_1 \dots$ of the $t_i$ such that $\varepsilon'_i < \varepsilon'_{i-1}$.
Choose an arbitrary infinite sequence of $t_i$; it must be that after some $t_j$, each $\varepsilon_k < \varepsilon_{k-1}$ for all $k > j$.
Otherwise, there would be an infinite subsequence of $t'_1 \dots$ of the $t_i$ with $\varepsilon'_i \geq \varepsilon'{i-1}$, contradicting proposition $\mathcal{P}$.
The sequence $t_k \dots$ show that the CRN converges, contradicting that the CRN does not converge.}
Since each $s(t_i)$ is at least $\varepsilon_1$ away from $s(t_{i+1})$, we have a path $\vec{a}\rightarrow s(t_0) \rightarrow s(t_1) \rightarrow \dots$ showing that $\vec{a}$ has unbounded flux.
\end{proof}

We prove that states which have a path to a static state have bounded potential, and so by the contrapositive of Lemma~\ref{lem:nc_implies_no_equilibrium_implies_unbounded potential} must converge to a static equilibrium under fair rate laws.

\begin{lemma}\label{lem:nc_implies_no_infinite_flux_path}
Assume a CRN is non-competitive.
If $\vec{a} \rightarrow \vec{b}$ and $\vec{b}$ is a static state, then $\vec{a}$ does not have unbounded potential.
\end{lemma}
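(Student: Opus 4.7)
The plan is a proof by contradiction that reduces immediately to Lemma~\ref{lem:nc_implies_two_paths_flux_difference_implies_one_is_not_static}. Suppose towards contradiction that $\vec{a}$ has unbounded potential, witnessed by an infinite path $\vec{a} \rightarrow^1_{\vec{u_1}} \vec{a_1} \rightarrow^1_{\vec{u_2}} \dots$ and a reaction $R$ with $\sum_{i=1}^{\infty}\vec{u_i}(R) = \infty$. By hypothesis we also have a finite line-segment path $p_1$ from $\vec{a}$ to the static state $\vec{b}$, and its total flux $F_{p_1}(R)$ on $R$ is some finite nonnegative real number.

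Since the partial sums of the divergent series $\sum_i \vec{u_i}(R)$ eventually surpass any finite value, there is an index $k$ with $\sum_{i=1}^{k}\vec{u_i}(R) > F_{p_1}(R)$. Take $p_2$ to be the finite prefix $\vec{a} \rightarrow^1_{\vec{u_1}} \dots \rightarrow^1_{\vec{u_k}} \vec{a_k}$. Then $p_1$ and $p_2$ are two paths leaving $\vec{a}$ satisfying the hypotheses of Lemma~\ref{lem:nc_implies_two_paths_flux_difference_implies_one_is_not_static}: $p_1$ has finite length and ends at $\vec{b}$, while $p_2$ applies reaction $R$ strictly more than $p_1$ does. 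That lemma then concludes $\vec{b}$ is not static, contradicting the hypothesis.

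All of the stoichiometric content, including the non-competitive assumption, is absorbed into Lemma~\ref{lem:nc_implies_two_paths_flux_difference_implies_one_is_not_static}; the remaining observation here is only that divergent partial sums exceed any finite threshold, so no genuine obstacle arises. The one subtlety worth flagging is that the witness path in the definition of unbounded potential is a priori infinite, so the argument explicitly truncates to a finite prefix before invoking the prior lemma (alternatively, one could note that the proof of that lemma inspects only a minimal initial segment of $p_2$ and hence applies verbatim to infinite $p_2$). Either formulation yields the contradiction and completes the proof.
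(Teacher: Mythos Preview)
Your proof is correct and follows essentially the same approach as the paper: assume unbounded potential, compare the infinite-flux path with the finite path $p_1$ from $\vec a$ to $\vec b$, and invoke Lemma~\ref{lem:nc_implies_two_paths_flux_difference_implies_one_is_not_static} to contradict staticity of $\vec b$. Your explicit truncation of the infinite path to a finite prefix is a bit more careful than the paper's version (which applies the lemma directly, since its statement does not require $p_2$ to be finite), but the argument is the same.
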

\begin{proof}
Towards contradiction, assume $\vec{a}$ has unbounded potential.
Let $p_1$ be any path from $\vec{a} \rightarrow \vec{b}$.
Since $\vec{a}$ has unbounded potential, there is a path $p_2$ from $\vec{a}$ with some reaction $R$ which is applied with infinite flux.
Then that reaction $R$ is applied more in $p_2$ than in $p_1$ (since it must be applied with finite flux in the finite path $p_1$), so by Lemma~\ref{lem:nc_implies_two_paths_flux_difference_implies_one_is_not_static}, $\vec{b}$ is not static.
\end{proof}

All that remains is to prove that the static equilibrium reached by the fair rate law is in fact the same state $\vec{b}$ as assumed in the nondeterministic kinetic model.
First we prove that we cannot have two different static states $\vec{b}$ and $\vec{c}$ both reachable from $\vec{a}$.

\begin{lemma}\label{lem:nc_implies_no_two_static_states}
For non-competitive CRNs, if $\vec{a}\rightarrow \vec{b}$ and $\vec{a} \rightarrow \vec{c}$ and $\vec{b}$ and $\vec{c}$ are static states, then $\vec{b} = \vec{c}$.
\end{lemma}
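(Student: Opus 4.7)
The plan is to derive $\vec{b} = \vec{c}$ by showing that the total flux accumulated along any path from $\vec{a}$ to $\vec{b}$ equals the total flux along any path from $\vec{a}$ to $\vec{c}$, reaction-by-reaction. Once we know the two cumulative flux vectors coincide, both states equal $\vec{a}+\vec{M}\vec{U}$ for the same $\vec{U}$, and the conclusion follows.

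First I would fix finite paths $p_1$ witnessing $\vec{a} \rightarrow \vec{b}$ and $p_2$ witnessing $\vec{a} \rightarrow \vec{c}$; recall that line-segment reachability is by definition realized by a finite sequence of straight-line steps, so both paths are finite and the per-reaction cumulative fluxes $F_{\vec{a} \rightarrow \vec{b}}(R)$ (along $p_1$) and $F_{\vec{a} \rightarrow \vec{c}}(R)$ (along $p_2$) are well-defined real numbers. Now apply Lemma~\ref{lem:nc_implies_two_paths_flux_difference_implies_one_is_not_static} with $p_1$ in the role of the ``finite path ending in the candidate static state'' and $p_2$ in the role of the ``other path'': if there existed a reaction $R$ with $F_{\vec{a} \rightarrow \vec{c}}(R) > F_{\vec{a} \rightarrow \vec{b}}(R)$, then $\vec{b}$ would fail to be static, contradicting the hypothesis. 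Hence $F_{\vec{a} \rightarrow \vec{c}}(R) \le F_{\vec{a} \rightarrow \vec{b}}(R)$ for every reaction $R$.

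The symmetric application, swapping the roles of $p_1,\vec{b}$ and $p_2,\vec{c}$, uses that $\vec{c}$ is static to yield $F_{\vec{a} \rightarrow \vec{b}}(R) \le F_{\vec{a} \rightarrow \vec{c}}(R)$ for every $R$. Combining the two inequalities gives equality of the total flux vectors; call their common value $\vec{U}$. Since straight-line reachability steps compose additively on the flux side, concatenating the steps of $p_1$ gives $\vec{b} = \vec{a} + \vec{M}\vec{U}$, and likewise $\vec{c} = \vec{a} + \vec{M}\vec{U}$, so $\vec{b} = \vec{c}$.

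There is essentially no hard step once Lemma~\ref{lem:nc_implies_two_paths_flux_difference_implies_one_is_not_static} is in hand; the only thing to double-check is that the argument genuinely produces the pointwise flux equality and not just equality of the sums $\vec{M}\vec{U}$, but this is automatic because Lemma~\ref{lem:nc_implies_two_paths_flux_difference_implies_one_is_not_static} is phrased in terms of a single reaction $R$ and is applied uniformly over all reactions. The mildest subtlety is the symmetric invocation: it requires that $\vec{c}$ be static in its own right (not merely stoichiometrically reachable), which is exactly the hypothesis of the lemma we are proving.
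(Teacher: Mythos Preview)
Your proof is correct and follows essentially the same approach as the paper: both arguments hinge on Lemma~\ref{lem:nc_implies_two_paths_flux_difference_implies_one_is_not_static} to rule out any reaction being applied strictly more along one path than the other. The paper's version is a one-line contradiction (``assume $\vec{b}\neq\vec{c}$, so some reaction is applied more on one side, contradicting staticity via Lemma~\ref{lem:nc_implies_two_paths_flux_difference_implies_one_is_not_static}''), while you unfold both directions explicitly to obtain pointwise flux equality before concluding $\vec{b}=\vec{c}$; the content is the same.
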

\begin{proof}
Towards contradiction, assume $\vec{b} \neq \vec{c}$.
Then, without loss of generality, $\vec{a} \rightarrow \vec{c}$ applies some reaction $R$ more than $\vec{a} \rightarrow \vec{b}$.
So by Lemma~\ref{lem:nc_implies_two_paths_flux_difference_implies_one_is_not_static}, $\vec{b}$ cannot be static.
\end{proof}

Using this lemma, there is only one static state $\vec{b}$ reachable from $\vec{a}$.
The next lemma is a restricted version of Theorem~\ref{thm:main}, assuming that the starting state is $\vec{a}$.
After, we will show how the same lemma holds for any $\vec{a'}$ such that $\vec{a} \rightarrow \vec{a'}$.

\begin{lemma}\label{lem:nc_implies_static_state_is_static_equilibrium}
Assume a CRN is non-competitive.
If $\vec{a} \rightarrow \vec{b}$ and $\vec{b}$ is a static state, then $\vec{a}$ converges to $\vec{b}$ under any fair rate law.
\end{lemma}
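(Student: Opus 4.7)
The plan is to chain together the three preceding lemmas, which collectively already handle the three things that could go wrong: the dynamics could blow up, converge to a dynamic equilibrium, or converge to a different static state than $\vec{b}$. First I would apply Lemma~\ref{lem:nc_implies_no_infinite_flux_path} directly: the hypothesis $\vec{a} \rightarrow \vec{b}$ with $\vec{b}$ static means $\vec{a}$ does not have unbounded potential in the nondeterministic kinetic model.

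Next, I would take the contrapositive of Lemma~\ref{lem:nc_implies_no_equilibrium_implies_unbounded potential}: since $\vec{a}$ has bounded potential, the trajectory starting from $\vec{a}$ under the given fair rate law must converge to some static equilibrium $\vec{c}$. The question is then whether $\vec{c} = \vec{b}$. This is where fairness plays its second role: by clause~(2) of Definition~\ref{def:fair_rate_law}, rate-law reachability from $\vec{a}$ to $\vec{c}$ (including in the limit) implies the segment-reachability $\vec{a} \rightarrow \vec{c}$ in the nondeterministic model.

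Finally, I would invoke Lemma~\ref{lem:nc_implies_no_two_static_states}: from the hypothesis we have $\vec{a} \rightarrow \vec{b}$ and we have just argued $\vec{a} \rightarrow \vec{c}$, with both $\vec{b}$ and $\vec{c}$ static. Non-competition then forces $\vec{b} = \vec{c}$, so $\vec{a}$ converges to $\vec{b}$ under the fair rate law, as claimed.

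I do not anticipate any real obstacle here, since all the heavy lifting is done by Lemma~\ref{lem:nc_implies_two_paths_flux_difference_implies_one_is_not_static} and its corollaries. The only subtle point to be careful about is the use of fairness in two distinct places: once implicitly inside Lemma~\ref{lem:nc_implies_no_equilibrium_implies_unbounded potential} (to guarantee that any limit state or cyclic behavior is witnessed by a segment-reachable path), and once to obtain $\vec{a} \rightarrow \vec{c}$ so that Lemma~\ref{lem:nc_implies_no_two_static_states} becomes applicable. I would flag this explicitly in the writeup so the reader sees that mass-action is not used anywhere beyond Lemma~\ref{lem:mass-action_reachable_implies_segment_reachable}, and the statement therefore really does hold for any fair rate law.
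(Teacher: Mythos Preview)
Your proposal is correct and follows essentially the same argument as the paper's proof: apply Lemma~\ref{lem:nc_implies_no_infinite_flux_path}, take the contrapositive of Lemma~\ref{lem:nc_implies_no_equilibrium_implies_unbounded potential} to obtain a static equilibrium $\vec{c}$, use fairness to get $\vec{a} \rightarrow \vec{c}$, and conclude $\vec{b} = \vec{c}$ via Lemma~\ref{lem:nc_implies_no_two_static_states}. Your observation that fairness is invoked in two distinct places is a nice clarification that the paper's proof leaves implicit.
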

\begin{proof}.
By Lemma~\ref{lem:nc_implies_no_infinite_flux_path}, $\vec{a}$ does not have unbounded potential.
So by the contrapositive of Lemma~\ref{lem:nc_implies_no_equilibrium_implies_unbounded potential}, $\vec{a}$ converges to a static equilibrium under mass action.
We will show that this static equilibrium must be $\vec{b}$.
Towards contradiction, assume $\vec{a}$ converges to some $\vec{c} \neq \vec{b}$ under mass action.
Then by the fair rate law assumption, $\vec{a} \rightarrow \vec{c}$.
Also note that $\vec{c}$ is a static state since it is a static equilibrium.
So Lemma~\ref{lem:nc_implies_no_two_static_states} implies $\vec{c} = \vec{b}$.
\end{proof}

Next we will show that the above holds for any state $\vec{a'}$ such that $\vec{a} \rightarrow \vec{a'}$.
This is done by showing that any reachable state $\vec{a'}$ can still reach the static state $\vec{b}$, and thus intuitively any reachable $\vec{a'}$ may replace $\vec{a}$ for all of the lemmas above.

\begin{lemma}\label{lem:nc_implies_stable_convergence}
For non-competitive CRNs, if $\vec{a} \rightarrow \vec{b}$ and $\vec{b}$ is a static state, then for all $\vec{a'}$ such that $\vec{a} \rightarrow \vec{a'}$, it must be that $\vec{a'} \rightarrow \vec{b}$.
\end{lemma}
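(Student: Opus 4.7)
The plan is to show that (i) $\vec{a'}$ converges to some static equilibrium $\vec{c}$ under mass-action kinetics, and then (ii) invoke Lemma~\ref{lem:nc_implies_no_two_static_states} to conclude $\vec{c} = \vec{b}$. Since mass action is a fair rate law, Lemma~\ref{lem:mass-action_reachable_implies_segment_reachable} will then yield $\vec{a'} \rightarrow \vec{c} = \vec{b}$.

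The main step is (i), which by the contrapositive of Lemma~\ref{lem:nc_implies_no_equilibrium_implies_unbounded potential} reduces to showing that $\vec{a'}$ does not have unbounded potential. I will argue by contradiction: suppose $\vec{a'}$ admits an infinite path $\vec{a'} \rightarrow^1_{\vec{u_1}} \vec{a'_1} \rightarrow^1_{\vec{u_2}} \dots$ with some reaction $R$ satisfying $\sum_{i=1}^{\infty} \vec{u_i}(R) = \infty$. Prepending the given finite path $\vec{a} \rightarrow \vec{a'}$ to this tail produces an infinite path starting at $\vec{a}$ along which the total flux of $R$ is still infinite (the prefix contributes only finitely much, being of finite length). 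Hence $\vec{a}$ itself would have unbounded potential, contradicting Lemma~\ref{lem:nc_implies_no_infinite_flux_path} applied to the hypothesis that $\vec{a} \rightarrow \vec{b}$ with $\vec{b}$ static. Therefore $\vec{a'}$ has bounded potential, and so it converges to some static equilibrium $\vec{c}$ under mass action.

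To conclude, Lemma~\ref{lem:mass-action_reachable_implies_segment_reachable} gives $\vec{a'} \rightarrow \vec{c}$, and transitivity of line-segment reachability yields $\vec{a} \rightarrow \vec{c}$. Since both $\vec{b}$ and $\vec{c}$ are static states reachable from $\vec{a}$, Lemma~\ref{lem:nc_implies_no_two_static_states} forces $\vec{c} = \vec{b}$, giving $\vec{a'} \rightarrow \vec{b}$ as required.

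No step here requires heavy calculation; I expect the conceptual pivot to be the prepending argument in the second paragraph, which transfers "bounded potential" from the ancestor $\vec{a}$ to the descendant $\vec{a'}$. Without this inheritance, we could not apply Lemmas~\ref{lem:nc_implies_no_equilibrium_implies_unbounded potential} and~\ref{lem:mass-action_reachable_implies_segment_reachable} to $\vec{a'}$, and the uniqueness argument via Lemma~\ref{lem:nc_implies_no_two_static_states} would have nothing to latch onto. A tempting alternative—trying to explicitly reshape the path $\vec{a}\rightarrow\vec{b}$ into one from $\vec{a'}\rightarrow\vec{b}$ by "subtracting" the flux of $\vec{a}\rightarrow\vec{a'}$—is much messier, because the step-by-step applicability constraint in the nondeterministic kinetic model does not straightforwardly tolerate such surgery; the indirect route through mass-action convergence neatly sidesteps this difficulty.
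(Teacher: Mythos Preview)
Your proof is correct and follows essentially the same route as the paper's: both pass through mass-action convergence of $\vec{a'}$, invoke Lemma~\ref{lem:mass-action_reachable_implies_segment_reachable} to get $\vec{a'}\rightarrow\vec{c}$, and finish with Lemma~\ref{lem:nc_implies_no_two_static_states}. If anything, your version is slightly more explicit, since the paper's proof jumps directly from ``$\vec{a'}$ does not converge'' to ``$\vec{a}$ has unbounded potential'' without spelling out the prepending argument you give.
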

\begin{proof}
There are two cases: given a fair rate law, $\vec{a'}$ either converges or does not converge to a static equilibrium.
If $\vec{a'}$ reaches a static equilibrium $\vec{c}$, then by Lemma~\ref{lem:mass-action_reachable_implies_segment_reachable}, $\vec{a'} \rightarrow \vec{c}$, so $\vec{a} \rightarrow \vec{c}$. 
Then Lemma~\ref{lem:nc_implies_no_two_static_states} implies $\vec{b} = \vec{c}$.
Otherwise, if $\vec{a'}$ does not reach a static equilibrium,
then Lemma~\ref{lem:nc_implies_no_equilibrium_implies_unbounded potential} implies $\vec{a}$ has unbounded potential.
However, since $\vec{a} \rightarrow \vec{b}$ and $\vec{b}$ is static, this contradicts Lemma~\ref{lem:nc_implies_no_infinite_flux_path}.
\end{proof}

Together, Lemmas~\ref{lem:nc_implies_static_state_is_static_equilibrium}~and~\ref{lem:nc_implies_stable_convergence} prove Theorem~\ref{thm:main} for any fair rate law.

\subsection{Proof of Optimization Procedure}\label{sec:optimization_proof}
Here we prove that the optimization procedure of Section~\ref{sec:optimization} does not change the state of convergence if the CRN is non-competitive.
For simplicity, we prove the theorem in the case that the optimization removes one reaction.
Removing many reactions is done by removing one reaction at a time.
If $\vec{a}$ is a vector of length $\Lambda$, then let $\vec{a}^{\setminus i}$ be the same vector without an entry for element $i$, i.e., the projection of $\vec{a}$ from the space $R_{\geq0}^\Lambda$ to the subspace $R_{\geq0}^{\Lambda \setminus i}$.
Intuitively, this maps states and flux vectors of a CRN to its optimized CRN (when just one species/reaction is removed).

\begin{theorem}
Assume a CRN is non-competitive, and consider its optimized CRN generated by removing a reaction $R$ with reactant $S$.
If $\vec{a} \rightarrow \vec{b}$ and $\vec{b}$ is a static state and $\vec{a}(S), \vec{b}(S) = 0$, then the optimized CRN has $\vec{a}^{\setminus S} \rightarrow \vec{b}^{\setminus S}$.
\end{theorem}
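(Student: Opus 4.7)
The plan is to take the line-segment path $\vec{a} = \vec{a}_0 \rightarrow^1_{\vec{u}_1} \vec{a}_1 \rightarrow^1_{\vec{u}_2} \cdots \rightarrow^1_{\vec{u}_n} \vec{a}_n = \vec{b}$ witnessing $\vec{a} \rightarrow \vec{b}$ in the original CRN and to construct a matching path in the optimized CRN by replaying the same sequence of flux vectors with the $R$-coordinate deleted:
\[
\vec{a}^{\setminus S} = \vec{a}'_0 \rightarrow^1_{\vec{u}_1^{\setminus R}} \vec{a}'_1 \rightarrow^1 \cdots \rightarrow^1_{\vec{u}_n^{\setminus R}} \vec{a}'_n.
\]
The two tasks will be (i) verifying that each step is applicable in the optimized CRN, and (ii) checking that the endpoint $\vec{a}'_n$ equals $\vec{b}^{\setminus S}$.

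For bookkeeping, let $p_X$ denote the number of copies of $X$ produced by reaction $R$ (with $p_X = 0$ when $X$ is not a product of $R$, and $p_S = -1$). Non-competition guarantees that $S$ is not a reactant in any reaction $Q \neq R$, so the optimization is well defined and the optimized stoichiometry matrix satisfies
\[
\vec{M}'_{XQ} \;=\; \vec{M}_{XQ} + \vec{M}_{SQ}\, p_X \qquad \text{for all } X \neq S,\ Q \neq R.
\]
The heart of the argument will be the invariant
\[
\vec{a}'_i(X) \;=\; \vec{a}_i(X) + p_X\,\vec{a}_i(S) \qquad \text{for all } X \neq S,
\]
proved by induction on $i$. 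The base case is immediate from $\vec{a}(S) = 0$. For the inductive step, expand $\vec{a}'_i(X) - \vec{a}'_{i-1}(X) = \sum_{Q \neq R} \vec{M}'_{XQ} \vec{u}_i(Q)$ using the formula for $\vec{M}'$, and rewrite $\sum_{Q \neq R} \vec{M}_{SQ}\vec{u}_i(Q) = [\vec{a}_i(S) - \vec{a}_{i-1}(S)] + \vec{u}_i(R)$ (since $\vec{M}_{SR} = -1$) and $\sum_{Q \neq R} \vec{M}_{XQ}\vec{u}_i(Q) = [\vec{a}_i(X) - \vec{a}_{i-1}(X)] - p_X \vec{u}_i(R)$. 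The $\pm p_X \vec{u}_i(R)$ terms cancel, the inductive hypothesis absorbs $\vec{a}_{i-1}(X) + p_X\vec{a}_{i-1}(S)$, and the invariant at step $i$ drops out.

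Applicability then follows routinely: whenever $\vec{u}_i^{\setminus R}(Q) > 0$ for some $Q \neq R$, each reactant $X$ of $Q$ has $X \neq S$ by non-competition and $\vec{a}_{i-1}(X) > 0$ by applicability of $\vec{u}_i$ in the original path; since $p_X \geq 0$ for all $X \neq S$, the invariant gives $\vec{a}'_{i-1}(X) \geq \vec{a}_{i-1}(X) > 0$. Finally, at $i = n$ the invariant combined with $\vec{b}(S) = 0$ gives $\vec{a}'_n(X) = \vec{b}(X) = \vec{b}^{\setminus S}(X)$ for every $X \neq S$, so $\vec{a}^{\setminus S} \rightarrow \vec{b}^{\setminus S}$. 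The main obstacle I anticipate is purely bookkeeping: making sure the rewriting of $R$'s products into the columns of other reactions corresponds exactly to the correction term $p_X \vec{a}_i(S)$, and invoking non-competition precisely where it is needed (to rule out $S$ as a reactant elsewhere, which is what makes both the optimized matrix formula unambiguous and the applicability check trivial).
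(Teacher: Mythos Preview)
Your proof is correct and follows the same overall strategy as the paper: replay the original path using the flux vectors $\vec{u}_i^{\setminus R}$ in the optimized CRN, then verify applicability at each intermediate state and check that the endpoint is $\vec{b}^{\setminus S}$.

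The execution differs in a useful way. The paper establishes only the inequality $\vec{a}'_i(A) \geq \vec{a}_i(A)$, obtained by partitioning the reactions into $\{R\}$, the producers $\mathcal{T}$ of $S$, and the rest $\mathcal{K}$, and then bounding the cumulative flux through $R$ by the cumulative flux through $\mathcal{T}$. You instead prove the exact identity $\vec{a}'_i(X) = \vec{a}_i(X) + p_X\,\vec{a}_i(S)$ directly from the compact formula $\vec{M}'_{XQ} = \vec{M}_{XQ} + \vec{M}_{SQ}\,p_X$. This is cleaner: the equality yields the paper's inequality immediately (since $p_X \geq 0$ and $\vec{a}_i(S) \geq 0$ for $X \neq S$), and it dispatches the endpoint check in one line via $\vec{b}(S)=0$, whereas the paper argues the endpoint separately. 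Neither argument actually uses the hypothesis that $\vec{b}$ is static, so your not invoking it is fine.
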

\begin{proof}
We write $\vec{a} \rightarrow^1_{\vec{u}_1} \vec{a}_1 \rightarrow^1_{\vec{u}_2} \dots \rightarrow^1_{u_k} \vec{b}$.
For the optimized CRN, we will show that the same sequence of flux vectors is a valid path for the optimized CRN which reaches the same state.
Formally, we will show $\vec{a}^{\setminus S} \rightarrow^1_{\vec{u}_1^{\setminus R}} \vec{a}'_1 \rightarrow^1_{\vec{u}_2^{\setminus R}} \dots \rightarrow^1_{u_k^{\setminus R}} \vec{b}^{\setminus S}$.

First note since $\vec{b} = \vec{M}\sum_{i=1}^k \vec{u}_i + \vec{a}$, that also $\vec{b}^{\setminus S} = \vec{M}'\sum_{i=1}^k (u_i^{\setminus R}) + \vec{a}^{\setminus S}$ where $\vec{M}'$ is the stoichiometry matrix for the optimized CRN.
This holds reactions producing $R$'s reactant now produce $R$'s products in $\vec{M}'$; and because $\vec{a}(S), \vec{b}(S) = 0$, any reactant of $R$ that is produced in the path from $\vec{a} \rightarrow \vec{b}$ must be consumed by reaction $R$ to produce the products in $\vec{b}$ (they must be consumed by $R$ due to non-competition).

Then it remains to show that $u_{i+1}^{\setminus R}$ is applicable at state $\vec{a}'_i$, noting that $\vec{a}'_i$ is not necessarily $\vec{a}_i^{\setminus S}$.
It helps to decompose the reactions into three sets: the removed reaction $\{R\}$, the set $\mathcal{T}$ of reactions which produced species $S$ in the original CRN, and the set of reactions $\mathcal{K}$ which did not produce $S$ in the original CRN so are unmodified by the optimization.
Consider an arbitrary species $A \neq S$; we will show that $\vec{a}'_i(A) \geq \vec{a}_i(A)$, implying that $\vec{u}^{\setminus{R}}_{i+1}$ is applicable in $\vec{a}'_i$ since it is applicable in $\vec{a}_i$.
We can determine the concentrations:

\begin{align}
\vec{a}'_i(A) = \vec{a}^{\setminus S}(A) &+ \sum_{K \in \mathcal{K}}\left( M'_{A,K} \sum_{j = 1}^i \vec{u}^{\setminus R}_{j}(K)\right) \\ 
&+ \sum_{T \in \mathcal{T}} \left( M'_{A,T} \sum_{j = 1}^i \vec{u}^{\setminus R}_{j}(T) \right)\\
\vec{a}_i(A) = \vec{a}(A) &+ \sum_{K \in \mathcal{K}}\left( M_{A,K} \sum_{j = 1}^i \vec{u}_j(K)\right) \\ 
&+ \sum_{T \in \mathcal{T}} \left( M_{A,T} \sum_{j = 1}^i \vec{u}_j(T) \right)\\
&+ M_{A,R} \sum_{j = 1}^i \vec{u}_j(R)
\end{align}

Note that $$\sum_{K \in \mathcal{K}}\left( M'_{A,K} \sum_{j = 1}^i \vec{u}^{\setminus R}_{j}(K)\right) = \sum_{K \in \mathcal{K}}\left( M_{A,K} \sum_{j = 1}^i \vec{u}_j(K)\right),$$ so it remains to show:
\begin{align}
\sum_{T \in \mathcal{T}} \left( M'_{A,T} \sum_{j = 1}^i \vec{u}^{\setminus R}_{j}(T) \right) \geq \\
\sum_{T \in \mathcal{T}} \left( M_{A,T} \sum_{j = 1}^i \vec{u}_j(T) \right)
+ M_{A,R} \sum_{j = 1}^i \vec{u}_j(R).
\end{align}

If $A$ is not produced in $R$, then $\vec{M}_{A,R} = 0$ and $\vec{M}'_{A,T} = \vec{M}_{A,T}$ so the terms are equal.
Otherwise, $A$ is produced in $R$.
Since reaction $R$ has only one reactant $S$ and the initial concentration of $S$ is zero, we know that the total flux through $R$ depends on the total flux through reactions in $\mathcal{T}$ (the reactions which produce $S$),

\begin{align}
\sum_{T \in \mathcal{T}}\left( M_{S,T} \sum_{j = 1}^i \vec{u}_j(T)\right) \geq \sum_{j = 1}^i \vec{u}_j(R),
\end{align}
\noindent

Due to the optimization procedure, the amount of $A$ produced by $T$ is equal to the original amount produced plus the amount produced by $R$ times the number of appearances of $S$ as a reactant, i.e., $\vec{M}'_{A,T} = \vec{M}_{A,T} + \vec{M}_{S,T}\vec{M}_{A,R}$, so:

\begin{align}
\sum_{T \in \mathcal{T}} \left( M'_{A,T} \sum_{j = 1}^i \vec{u}^{\setminus R}_{j}(T) \right) \\
= \sum_{T \in \mathcal{T}} \left( M_{A,T} \sum_{j = 1}^i \vec{u}^{\setminus R}_{j}(T) \right) &+ \vec{M}_{A,R} \sum_{T \in \mathcal{T}}  \left( \vec{M}_{S,T} \sum_{j = 1}^i \vec{u}^{\setminus R}_{j}(T) \right)\\
\geq \sum_{T \in \mathcal{T}} \left( M_{A,T} \sum_{j = 1}^i \vec{u}^{\setminus R}_{j}(T) \right) &+ \vec{M}_{A,R} \sum_{j = 1}^i \vec{u}_j(R).
\end{align}

Therefore $\vec{a}'_i(A) \geq \vec{a}_i(A)$ so the flux vector $\vec{u}^{\setminus R}_{i+1}$ is applicable at state $\vec{a}'_i$.
Since $i$ was arbitrary, we have constructed a path showing that $\vec{a}^{\setminus S} \rightarrow \vec{b}^{\setminus S}$.

\end{proof} %
\subsection{Non-competitive Bimolecular Rational Multiplication}\label{sec:BRMCRNs_proofs}
Here we show correctness for the construction from Figure~\ref{fig:rational_crn}.
We argue for any two numbers $p,q \in \mathbb{Z}$, our construction computes $y=\frac{p}{q}x$.

    First, we describe how to construct the CRN from Figure~\ref{fig:rational_crn}.
    Let $a.bc^\infty$ be the binary expansion of $\frac{p}{q}$ where $a \in \{0,1\}^i$, $b \in \{0,1\}^j$, and $c \in \{0,1\}^k$.
    Construct a CRN of the form given in Fig~\ref{fig:rational_crn}b with $n$ reactions ($n=i+j+k+3$) where each of the reactions (other than the first and last) is either of the form $L_r \rxn L_{r+1} + L_{r+1}$ or $R_r + R_r \rxn R_{r+1}$.
    Let us enumerate the bits in $a.bc^\infty$ (from left to right) as $b_i b_{i-1} \dots b_2 b_1 . b_{i+1} b_{i+2} \dots$
    For each bit $b_m$, where $0<m\leq n$, in $a.bc^\infty$, if $b_m = 1$ add the output species $Y$ as a product to reaction $m$.

    To prove correctness, it is sufficient to reason about the stoichiometry of one particular path to a static state (due to the non-competitive nature of this CRN).
    Given an ordering on species $(X,L_0,L_1,\dots,L_i,R_0,R_1,\dots,R_j,Y)$ and an ordering on reactions as listed in Fig~\ref{fig:rational_crn}b, consider $\vec{a} + \vec{M}\vec{v} = \vec{b}$ with initial state
    $\vec{a} = [x,0,\dots,0]$, 
    final state $\vec{b} = [0,0,\dots,\frac{p}{q}x]$,
    and a stoichiometry matrix as defined by the CRN:
    \begin{center}
      \includegraphics[scale=0.65]{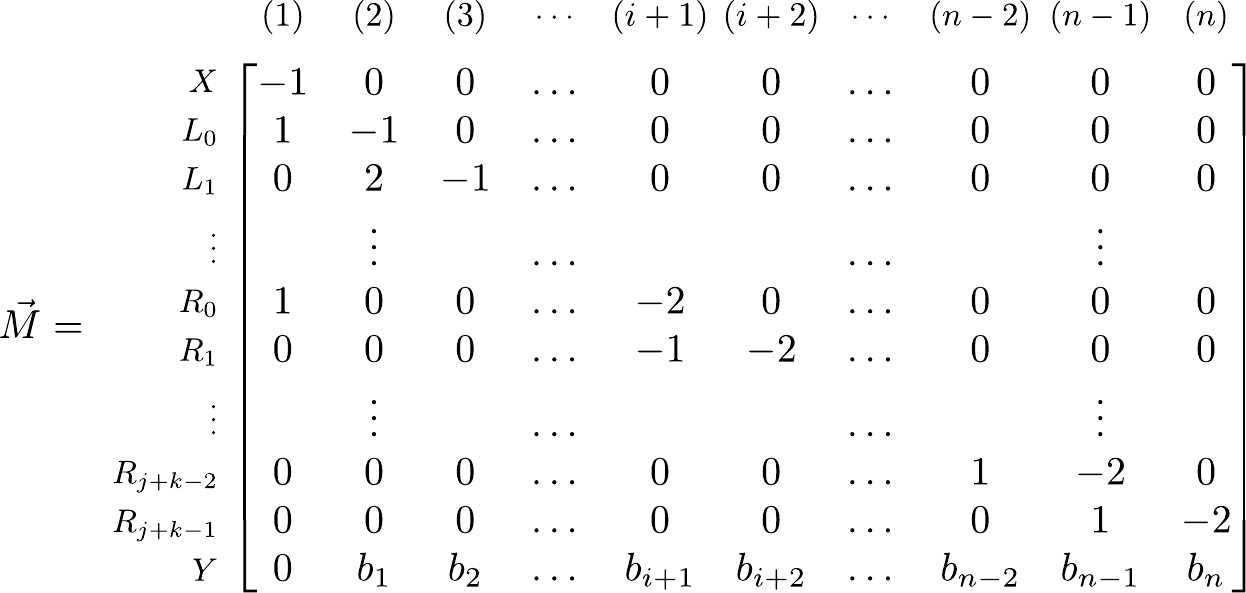}
    \end{center}

    We can solve for $\vec{v}$ to find
    $\vec{v} = [x,x,2x,4x,\dots,2^{i-1}x,\frac{x}{2},\frac{x}{4},\dots,\frac{x}{2^j},\dots,\frac{x}{2^{j+k}}]^T$.
    Our problem, however, is that $\vec{v}$ is not applicable at $a$.
    To remedy this, we decompose $v$ into $u_1, u_2, \dots, u_n$ such that each $u$ is applicable.
    \begin{align*}
      \vec{v} &= {\sum_{i=1}^n u_i} \text{ where } \\
      u_1 &= [\frac{1}{2}v[0],0,0,\dots,0]^T \\
      u_2 &= [0,\frac{1}{4}v[1],0,\dots,0]^T \\
      u_3 &= [0,0,\frac{1}{8}v[2],\dots,0]^T \\
      \vdots\\
      u_{i+1} &= [0,\dots \frac{1}{4}v[i],0 \dots, 0]^T \\
      u_{i+2} &= [0,\dots,0,\frac{1}{8}v[i+1] \dots, 0]^T \\
      \vdots\\
      u_{n-1} &= [0,\dots,\frac{1}{2^{j+k}}v[n-1]]^T \\
      u_n &= v - \sum_{i=1}^{n-1} u_i
    \end{align*}

    Then, $$\vec{a} + \vec{M}{\sum_{i=1}^{n}u_i} = \vec{b}.$$
    Thus, by Theorem~\ref{thm:main}, our construction stoichiometrically computes $y = \frac{p}{q}x$.

\subsection{Analogous Theorems for Stochastic Kinetic Models}\label{sec:stochastic_proofs}
Here we show that a theorem analogous to Theorem~\ref{thm:main} is also true for non-competitive CRNs in the stochastic model.
The stochastic model of CRNs differs from the concentration-, ODE-based kinetic models of CRNs mainly in that concentrations are replaced by discrete amounts of species and reaction applications are discrete events which change species' amounts by integer values.

We provide some basic definitions of reachability in the stochastic model.
It will be sufficient to reason only about reachability.\footnote{Typically stochastic CRNs are modeled as continuous time Markov processes, but our results hold as long as transition probabilities corresponding to applying a reaction are positive if all reactants for the reaction are positive.
In other words, the kinetics must obey a stochastic equivalent of the \emph{fair rate law} assumption in the continuous case.}
Note first that the stoichiometry matrix $\vec{M}$ is the same as the continuous model.
States of a CRN are an assignment of counts to each species, and so we can view them as vectors of nonnegative integers.
To define reachability by applying single reactions as discrete events, we say state $\vec{a} \rightarrow^1_{R} \vec{b}$ if there is a reaction $R$ such that $R$ is applicable in $\vec{a}$ and $\vec{b} = \vec{M}\vec{u}_R + \vec{a}$, where $\vec{u}_R(R') = 0$ for all $R' \neq R$ and $\vec{u}_R(R) = 1$.
Then we let $\rightarrow$ be the transitive reflexive closure of $\rightarrow^1$, i.e., reachability by applying zero or more reactions.
If $\vec{a} \rightarrow \vec{b}$, we can think of the existing sequence of $\rightarrow^1$ relations to get from $\vec{a}$ to $\vec{b}$ as a \emph{path}.

Note that the following lemma is analogous to Lemma~\ref{lem:nc_implies_two_paths_flux_difference_implies_one_is_not_static}, but has a simpler proof due to the discrete model.

\begin{lemma}\label{lem:stochastic_lemma}
Assume a CRN is non-competitive.
Consider two paths $p_1$ and $p_2$ leaving state $\vec{a}$. If $p_1$ has finite length and ends in state $\vec{b}$ and $p_2$ applies some reaction $R$ more than $p_1$, then $\vec{b}$ is not static.
\end{lemma}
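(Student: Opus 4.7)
The plan is to adapt the idea behind Lemma~\ref{lem:nc_implies_two_paths_flux_difference_implies_one_is_not_static}, but the discrete setting makes the combinatorial bookkeeping cleaner: there is no need for the continuous interpolation trick of choosing a fractional $\lambda$ along a line segment, because over-application happens at a single discrete step. The key observation I will exploit is that non-competition forces any reactant of the ``over-applied'' reaction to be unavailable as a reactant anywhere else, so that it can only be produced (never consumed) by the other reactions in the network.

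First, I will introduce notation: for any prefix of a stochastic path, let $c(R'')$ denote the number of times $R''$ has been fired in that prefix, and let $c_1(R'')$ be the total count of $R''$-firings in $p_1$. The hypothesis of the lemma says that $c_{p_2}(R) > c_1(R)$ eventually. I will then pick the first step $i$ along $p_2$ whose firing causes some reaction's running count to strictly exceed its total in $p_1$; call that reaction $R'$ (which may or may not equal the $R$ of the hypothesis, but some such step must exist). Let $\vec{a}'$ be the state immediately before step $i$. By the minimality of $i$, for every reaction $R''$ we have $c_{\vec{a}'}(R'') \leq c_1(R'')$, and for $R'$ in particular, equality holds (step $i$ is the first firing that pushes it past $c_1(R')$).

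Second, for an arbitrary reactant $S$ of $R'$, I will compare $\vec{a}'(S)$ with $\vec{b}(S)$ using the stoichiometry matrix: each equals $\vec{a}(S)$ plus $\sum_{R''} \vec{M}_{S,R''}\, c(R'')$, taken along the respective paths. The difference $\vec{b}(S) - \vec{a}'(S)$ is therefore $\sum_{R'' \neq R'} \vec{M}_{S,R''}\,(c_1(R'') - c_{\vec{a}'}(R''))$, since the $R'$-contributions cancel. Now non-competition enters: because $S$ is a reactant of $R'$ and $S$ is decreased by $R'$, it cannot be a reactant of any other reaction, so $\vec{M}_{S,R''} \geq 0$ for $R'' \neq R'$. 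Each difference $c_1(R'') - c_{\vec{a}'}(R'')$ is also nonnegative by the choice of step $i$, so $\vec{b}(S) \geq \vec{a}'(S)$. Since $R'$ is applicable at $\vec{a}'$, the count $\vec{a}'(S)$ already meets the stoichiometric coefficient of $S$ in $R'$, and so does $\vec{b}(S)$. Ranging over all reactants of $R'$, reaction $R'$ is applicable at $\vec{b}$, hence $\vec{b}$ is not static.

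The only real subtlety to get right is the choice of the over-application step and keeping the two accounting directions straight (the ``over-applied'' reaction versus all other reactions), since this is precisely where non-competition is used. If I had allowed $S$ to be a reactant of a second reaction $R'' \neq R'$, then $\vec{M}_{S,R''}$ could be negative and the comparison $\vec{b}(S) \geq \vec{a}'(S)$ would fail; so non-competition is exactly what makes the telescoping argument go through. Everything else is elementary integer arithmetic, with no need for the continuous limits, Lipschitz estimates, or fair-rate-law machinery used in the ODE version.
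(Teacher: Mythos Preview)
Your proof is correct and is essentially the paper's own argument spelled out in full: the paper simply refers back to Lemma~\ref{lem:nc_implies_two_paths_flux_difference_implies_one_is_not_static} and observes that in the discrete setting one can take $\vec{a}'$ to be the state just before the first over-firing step, exactly as you do, dispensing with the continuous $\lambda$-interpolation. The only cosmetic point is that your clause ``$S$ is decreased by $R'$'' is unnecessary and may fail for a catalytic reactant of $R'$, but the desired conclusion $\vec{M}_{S,R''}\geq 0$ for $R''\neq R'$ still follows directly from non-competition, since if $S$ were decreased in some $R''\neq R'$ then $S$ could be a reactant only of $R''$, contradicting that $S$ is a reactant of $R'$.
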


\begin{proof}
The proof is mostly the same as Lemma~\ref{lem:nc_implies_two_paths_flux_difference_implies_one_is_not_static}.
Note that since reaction events are discrete, we can set $\vec{a'} := \vec{a}_i$ and set $R' := R$, while still ensuring that $F_{\vec{a} \rightarrow \vec{a'}}(R') = F_{\vec{a} \rightarrow \vec{b}}(R')$ and for all $R'' \neq R'$, $F_{\vec{a} \rightarrow \vec{a'}}(R'') \leq F_{\vec{a} \rightarrow \vec{b}}(R'')$.
The rest of the proof remains the same.
\end{proof}

Using the above lemma we can state a useful theorem which captures non-competitive CRN behavior in stochastic kinetic models.
Note that reactions as discrete events simplify the notion of a \emph{length} of a path as the number of reaction applications, or the number of $\rightarrow^1_{\vec{u}(R)}$ relations (excluding the ``empty'' reaction $\rightarrow^1_{[0,\dots,0]^T}$). 

\begin{theorem}
Assume a stochastic CRN is non-competitive.
If $\vec{a} \rightarrow \vec{b}$ via path $p$ with length $\ell_p$ and $\vec{b}$ is static, then
there is no path from $\vec{a}$ with length longer than $\ell_p$,
any path with length $\ell_p$ also ends in $\vec{b}$,
and any path with length shorter than $\ell_p$ ends in a state which is not static.
\end{theorem}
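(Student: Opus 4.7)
The plan is to derive all three conclusions directly from Lemma~\ref{lem:stochastic_lemma} via a pigeonhole argument on path length. The key observation is that the length of a stochastic path from $\vec{a}$ equals the sum, over all reactions $R$, of the number of times $R$ is applied along the path; so if one path is strictly longer than another, some reaction must be applied strictly more often in the longer one. This turns ``length comparisons'' into ``reaction-count comparisons,'' which is precisely the hypothesis that Lemma~\ref{lem:stochastic_lemma} consumes.

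For the first conclusion, I would suppose toward contradiction that there is a path $p'$ from $\vec{a}$ with $\ell_{p'} > \ell_p$ (possibly infinite). The pigeonhole observation yields a reaction $R$ applied more times in $p'$ than in $p$, so Lemma~\ref{lem:stochastic_lemma}, taken with $p$ as the finite $p_1$ ending in $\vec{b}$ and $p'$ as $p_2$, forces $\vec{b}$ to be non-static, a contradiction. For the second conclusion, I would let $p'$ be any length-$\ell_p$ path from $\vec{a}$ ending in some state $\vec{b}'$; if any reaction $R$ were applied strictly more times in $p'$ than in $p$, the same lemma would again contradict staticness of $\vec{b}$. Hence $p'$ applies each reaction at most as many times as $p$, and since the totals agree the two paths have identical reaction-count vectors. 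The net stoichiometric effect $\vec{M}\vec{v}$ is therefore the same on both paths, so $\vec{b}' = \vec{a} + \vec{M}\vec{v} = \vec{b}$.

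For the third conclusion I would take any path $p'$ of length $\ell_{p'} < \ell_p$ ending in $\vec{b}'$ and suppose for contradiction that $\vec{b}'$ is static. Pigeonhole on path lengths again produces a reaction applied more in $p$ than in $p'$, and applying Lemma~\ref{lem:stochastic_lemma} in the symmetric direction, with $p_1 = p'$ and $p_2 = p$, contradicts staticness of $\vec{b}'$. I do not anticipate a genuine obstacle: the only subtlety is allowing $p_2$ to have infinite length in the first conclusion, which is permitted because the lemma requires only $p_1$ to be finite. The whole argument is thus reaction-count bookkeeping together with two symmetric invocations of the lemma, once with $p$ in the role of $p_1$ and once with $p'$ in that role.
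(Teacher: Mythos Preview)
Your proposal is correct and follows essentially the same approach as the paper: both derive all three claims from Lemma~\ref{lem:stochastic_lemma} via the pigeonhole observation that a longer path must apply some reaction strictly more often. Your treatment of the equal-length case is in fact slightly more explicit than the paper's, since you spell out that equal totals together with componentwise $\leq$ force identical reaction-count vectors and hence the same endpoint $\vec{b}$.
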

\begin{proof}
Let $p'$ be a path from $\vec{a}$ of length $\ell_{p'}$.

If $\ell_{p'} > \ell_{p}$, towards contradiction, $p'$ applies some reaction $R$ more than $p$, so by Lemma~\ref{lem:stochastic_lemma} $\vec{b}$ is not static which contradicts the lemma's assumption, so no such $p'$ exists.

If $\ell_{p'} \leq \ell_{p}$ and $p \neq p'$, then some reaction $R$ applies more in $p$ than in $p'$, so the state at the end of the path $p'$ cannot be static.
If $p = p'$, then both paths must end in $\vec{b}$.
\end{proof} 
\end{document}